\documentclass[letterpaper,journal]{IEEEtran}
\usepackage[utf8]{inputenc}
\usepackage[T1]{fontenc}
\usepackage{mathtools}
\usepackage{amssymb,amsfonts,amsthm}
\usepackage{bm}
\usepackage{diagbox}
\usepackage{graphicx}
\usepackage{caption}
\usepackage{subcaption}
\usepackage{booktabs}
\usepackage{multirow}
\usepackage{amsmath}
\usepackage{threeparttable}
\usepackage{tabularray}
\UseTblrLibrary{booktabs}
\usepackage{adjustbox}
\usepackage{array}
\usepackage{siunitx}
\usepackage{relsize}
\usepackage[normalem]{ulem}
\usepackage{xpatch}
\usepackage{stfloats}
\usepackage{textcomp}
\usepackage{microtype}
\usepackage{nicefrac}
\usepackage{enumitem}
\usepackage{comment}
\usepackage{tikz}
\usepackage{pifont}
\usepackage{url}
\usepackage[table]{xcolor}
\usepackage{hyperref}
\usepackage{balance}
\usepackage{verbatim}

\hypersetup{hidelinks,
	pdftitle={HiER-BLS: A Hierarchy-Guided and Error-Correcting Robust Incremental Broad Learning System},
	pdfauthor={Gongli Zhang; C. L. Philip Chen; Zhulin Liu},
	colorlinks=true,
	allcolors=black,
	pdfstartview=Fit}

\definecolor{lime}{HTML}{A6CE39}
\DeclareRobustCommand{\orcidicon}{
\begin{tikzpicture}
\draw[lime, fill=lime] (0,0)
circle[radius=0.16]
node[white]{{\fontfamily{qag}\selectfont \tiny \.{I}D}};
\end{tikzpicture}
\hspace{-2mm}
}
\foreach \x in {A, ..., Z}{
\expandafter\xdef\csname orcid\x\endcsname{\noexpand\href{https://orcid.org/\csname orcidauthor\x\endcsname}{\noexpand\orcidicon}}
}

\newtheorem{theorem}{Theorem}

\newtheorem{corollary}{Corollary}
\newtheorem{proposition}{Proposition}

\newcommand{\arraystretchparam}{1.2}

\begin{document}
\title{HiER-BLS: A Hierarchy-Guided and Error-Correcting Robust Incremental Broad Learning System}
\author{\IEEEauthorblockN{Gongli Zhang\hspace{-1.5mm}\orcidA{}, C. L. Philip Chen\hspace{-1.5mm}\orcidB{}, and Zhulin Liu\hspace{-1.5mm}\orcidC{}}% <-this
% stops a space
    \thanks{This work was funded in part by the National Natural Science Foundation of China grant under number 62536004, 62222603, in part by the Key-Area Research and Development Program of Guangdong Province under number 2023B0303030001, in part by the Program for Guangdong Introducing Innovative and Entrepreneurial Teams (2019ZT08X214), and in part by the Science and Technology Program of Guangzhou under number 2024A04J6310, and in part by the Fundamental Research Funds for the Central Universities 2025ZYGXZR021.
    \emph{(Corresponding author: Zhulin Liu.)}}% <-this % stops a space

    \thanks{The authors are with the Guangdong Provincial Key Laboratory of Computational AI Models and Cognitive Intelligence, the School of Computer Science and Engineering, South China University of Technology, Guangzhou 510006, China, also with the Pazhou Lab, Guangzhou 510335, China, and also with the Engineering Research Center of the Ministry of Education on Health Intelligent Perception and Paralleled Digital-Human, Guangzhou 510641, China (e-mail: csglzhang@mail.scut.edu.cn; philip.chen@ieee.org; liuzhl@scut.edu.cn).}
}

\markboth{HiER-BLS}{Zhang et al.: HiER-BLS: A Hierarchy-Guided and Error-Correcting Robust Incremental Broad Learning System}

\maketitle

\begin{abstract}
Broad Learning System (BLS) supports analytical training and incremental expansion, but its growth needs guidance on which inputs new blocks should learn from. Weight errors pose a further challenge by displacing learned outputs across class boundaries. We propose HiER-BLS to couple hierarchy-guided representation growth with error-correcting learning. Successive blocks focus on inputs selected by feature importance and correlation while preserving earlier representations. The evolving branch guides encoded learners through subspace size and sample confidence, so its learning experience informs both their feature views and supervision. For finite broad readouts, we show how codeword correlations transform fitted class scores. Prediction preservation depends on the distance from the actual output to the nearest decoding boundary relative to the model's sensitivity to weight errors. Experiments on five image and five tabular datasets demonstrate improved classification performance over representative BLS variants. Component studies show that hierarchy guidance benefits the encoded branch even when the guiding branch has lower standalone accuracy, with further gains from combining their scores. Longer codes continue to improve accuracy under stronger Gaussian weight errors after clean accuracy has largely saturated.
\end{abstract}

\begin{IEEEkeywords}
Broad learning system, error-correcting output codes, feature importance, incremental learning, weight perturbations.
\end{IEEEkeywords}

\section*{NOMENCLATURE}
\begin{description}[leftmargin=1.8cm,labelsep=0cm,style=nextline,font=\normalfont]
\item[$N,M,C$] Numbers of training samples, input features, and classes.
\item[$n,m$] Numbers of feature and enhancement windows.
\item[$\mathbf X,\mathbf Y$] Input matrix and one-hot targets.
\item[$\mathbf1_N$] All-ones column vector of length $N$.
\item[$\mathbf T$] Fixed target matrix; may include confidence scaling.
\item[$\mathbf D$] Nonnegative diagonal sample-weight matrix.
\item[$u_{t,ic},w_{t,i}$] Incorrect-class pair weights and normalized sample weights.
\item[$\mathcal L_{\mathrm{CE}}$] Cross-entropy of class scores and ground-truth labels.
\item[$\lambda$] Regularization coefficient for readout fitting and subspace alignment.
\item[$d$] Total scalar width of a fixed broad representation.
\item[$\mathbf a,\mathbf W$] Row of broad features and fitted output weights.
\item[$\mathbf K$] Fixed linear map from targets to output weights.
\item[$L,\Omega$] Code length and $C\times L$ codebook.
\item[$\mathbf G_\Omega$] Code Gram matrix $\Omega\Omega^\top$.
\item[$\boldsymbol\omega_c$] Codeword of class $c$.
\item[$\mathbf g,\widetilde{\mathbf g}$] Clean and perturbed encoded output rows.
\item[$\|\cdot\|_{\mathrm{op}}$] Matrix spectral norm.
\item[$\varepsilon_a,\varepsilon_W$] Node-vector and readout-weight perturbation budgets.
\item[$r(\mathbf x)$] Output-perturbation bound normalized by $\sqrt L$.
\item[$\rho(\mathbf g;\Omega)$] Normalized distance to the nearest decoding boundary.
\item[$D_*,U$] Nearest codeword separation and output-to-codeword distance, normalized by $\sqrt L$.
\item[$v_i,H$] Feature importance and its entropy-based score.
\item[$\mathbf{FCM}$] Matrix of feature-correlation magnitudes.
\item[$\mathrm{LP}_i,\mathrm{GP}_i$] Local ordering and global sampling priorities.
\item[$d_t,\mathbf X_t$] Selected input width and subspace at step $t$.
\item[$\mathbf Z_t,\mathbf H_t$] Feature and enhancement blocks.
\item[$r_t,s_t^{\mathrm h},q_t$] Scalar feature, enhancement, and total widths of block $t$.
\item[$R_t$] Accumulated scalar width $\sum_{s=1}^tq_s$.
\item[$\mathbf F_t,\mathbf A_t$] Unweighted and sample-weighted broad blocks.
\item[$\overline{\mathbf A}_t$] Concatenation of all weighted blocks through step $t$.
\item[$\mathbf L_t$] Cholesky factor of the accumulated regularized Gram matrix.
\item[$P$] Number of codebooks in the initial pool.
\item[$\mathbf S,L_{\mathrm{tot}}$] Concatenated base outputs and retained total code length.
\item[$L_{b,0},L_0$] Initial length of base $b$ and total columns introduced before pruning.
\item[$\mathrm{RG}_j$] Validation-accuracy gain for deleting column $j$.
\item[$\omega$] Validation-dependent branch fusion weight.
\end{description}

\section{Introduction}
\label{sec:introduction}
\IEEEPARstart{B}{road} Learning System (BLS)~\cite{chen2017broad,2020FuzzyBLS} offers an analytical learning framework with a flat network of feature and enhancement nodes. Its incremental algorithms expand the representation while reusing the output-layer computation of the existing model~\cite{chen2018universal,gong2021research}. As the model expands, the construction of new feature blocks and the reliability of the resulting predictions become central design concerns.

Recent work has improved several aspects of incremental broad learning. Zhong et al.~\cite{zhong2024robust} use a memory-matrix mechanism to handle data streams of uncertain scale. Chen et al.~\cite{chen2025efficient} develop inverse-matrix-free updates. Fu et al.~\cite{fu2022task} and Du et al.~\cite{du2023class} extend BLS to task- and class-incremental learning, respectively. These advances address how a model can be updated as its structure or learning task changes. The information supplied to each new block also determines what the expanded representation can learn. Input coordinates differ in predictive importance, and strongly correlated coordinates may contribute overlapping information. Their importance and dependence can guide which coordinates remain available to later blocks. The overall importance profile helps determine how rapidly to narrow the input subspace. This motivates hierarchy-guided evolution. New blocks progressively focus on selected inputs, while earlier blocks remain in the accumulated broad representation.

Weight errors pose a further challenge after training. Existing robustness-oriented BLS methods address noisy features through self-encoding, graph regularization, and kernel ensembles~\cite{yang2021extracting,liu2024dynamic,yu2021progressive}, or uncertain labels through reweighting and label enhancement~\cite{liu2023broad,qian2024partial,deng2025robust}. Errors in stored weights directly perturb the learned computation. Such errors arise, for example, from limited coefficient precision and programming imperfections in nonvolatile compute-in-memory devices~\cite{yu2024error}. The resulting output displacement changes a prediction when it crosses a class decision boundary. An error-tolerant output representation must therefore be studied together with the sensitivity of the trained classifier.

Error-Correcting Output Codes (ECOCs)~\cite{dietterich1994solving,gupta2022scalable} introduce redundancy by assigning each class a codeword. Yu et al.~\cite{yu2024error} analyze ECOCs through the neural tangent kernel (NTK). Their analysis relates codeword correlations to clean decoding and normalized code distances to weight-error tolerance. The analytical readout of BLS allows this relationship to be studied directly at finite width. We show exactly how codeword correlations transform the fitted class scores, and compare the output displacement caused by weight errors with the distance to the nearest decoding boundary. This comparison links code geometry with trained-model sensitivity. Confidence-scaled targets can produce outputs whose magnitudes differ from those of standard class codewords. Measuring the boundary distance at the actual output captures this difference.

We propose HiER-BLS to couple guided representation growth with encoded ensemble learning. The Hierarchy-Guided Evolution Engine (HGEE) arranges features by importance and correlation and adapts the subspace schedule to the importance profile. Its selected subspace size and class probabilities guide the Error-Correcting Ensemble Optimizer (ECEO). ECEO uses that size to construct different feature views and gives larger encoded targets to samples that HGEE finds difficult. Thus, the evolving branch's learning experience guides both what the encoded learners see and which samples they emphasize. A code column's usefulness depends on its learned output, so Post-hoc Codebook Pruning (PCP) selects columns through joint validation performance. Broad Adaptive Decision Fusion (BADF) then learns a classifier from both branches' scores. The branches cooperate during training and contribute jointly to the final decision.

The contributions are as follows:
\begin{enumerate}
\item We propose HiER-BLS, a unified framework that couples hierarchy-guided evolution with error-correcting learning to address representation growth and prediction reliability in incremental BLS.
\item We introduce a hierarchy-guided evolution principle that allocates new feature blocks to progressively focused input subspaces while preserving the accumulated broad representation.
\item We characterize how codeword correlations transform finite broad readouts and derive a prediction-preservation condition linking actual decoding margins to trained-model sensitivity under weight errors.
\item Experiments on five image and five tabular datasets demonstrate accuracy gains from the proposed framework, with component studies revealing the value of guidance between branches and coding redundancy under weight errors.
\end{enumerate}

The remainder of this article is organized as follows. Section~\ref{sec:related_work} reviews related work. Section~\ref{sec:theory} develops the theoretical analysis. Section~\ref{sec:method} details the proposed methodology and its computational complexity. Section~\ref{sec:experiments} reports the experimental results. Finally, Section~\ref{sec:conclusion} concludes with future directions.

\section{Related Work}
\label{sec:related_work}
\subsection{Broad Learning System}
BLS constructs a finite broad representation and fits its output layer analytically~\cite{chen2017broad}.
Let the training feature matrix be $  \mathbf{X} \in \mathbb{R}^{N \times M}  $ and the corresponding target matrix be $  \mathbf{Y} \in \{0,1\}^{N \times C}  $.
BLS first constructs $n$ feature windows as follows:
\begin{equation}
    \mathbf{Z}_i = \phi(\mathbf{X}\mathbf{W}_{e_i} + \mathbf1_N\bm{\beta}_{e_i}), \quad i = 1, \ 2, \ \dots,\ n,
\end{equation}
where $\phi$ is the feature activation function, $\mathbf W_{e_i}$ is a randomly generated weight matrix, $\bm\beta_{e_i}$ is a random bias row, and $\mathbf1_N\in\mathbb R^{N\times1}$ is the all-ones column vector.
All feature windows are concatenated as $  \mathbf{Z}^n \triangleq [\mathbf{Z}_1  ,  \mathbf{Z}_2   ,   \ldots   ,   \mathbf{Z}_{n}]  $. Subsequently, $  m  $  enhancement windows are generated by
\begin{equation}
    \mathbf{H}_j = \xi\left(\mathbf{Z}^n\mathbf{W}_{h_j} + \mathbf1_N\bm{\beta}_{h_j}\right),\quad j = 1,\ 2,\ \ldots,\ m,
\end{equation}
where $\xi$ is the enhancement activation function, and $\mathbf W_{h_j}$ and $\bm\beta_{h_j}$ are also randomly generated.
Similarly, all enhancement windows are concatenated as $  \mathbf{H}^m \triangleq [\mathbf{H}_1  ,  \mathbf{H}_2 ,  \ldots  ,  \mathbf{H}_{m}]  $.
The broad feature matrix is formed as
$\mathbf A=[\mathbf Z^n,\mathbf H^m]\in\mathbb R^{N\times d}$,
where $d$ denotes its total number of columns. For a fixed broad feature
matrix, the output weights are obtained by solving the ridge-regression
problem
\begin{equation}
\mathbf W = \operatorname*{arg\,min}_{\mathbf V\in\mathbb R^{d\times C}}
\left\{
\|\mathbf A\mathbf V-\mathbf Y\|_F^2
+\lambda\|\mathbf V\|_F^2
\right\},
\label{eq:background-ridge-objective}
\end{equation}
where $\lambda>0$ is the regularization coefficient. Its unique solution is
\begin{equation}
\begin{aligned}
\mathbf W&=\mathcal K_\lambda(\mathbf A)\mathbf Y,\\
\mathcal K_\lambda(\mathbf A)
&:=(\mathbf A^\top\mathbf A+\lambda\mathbf I_d)^{-1}
\mathbf A^\top.
\end{aligned}
\label{eq:background-ridge-operator}
\end{equation}
The ridge-regression operator $\mathcal K_\lambda(\mathbf A)$ approaches
the Moore--Penrose pseudoinverse~\cite{pseudoinversebib,greville1960some} in the limit
$\mathbf A^+=\lim_{\varepsilon\downarrow0}\mathcal K_\varepsilon(\mathbf A)$.

When new feature and enhancement nodes are appended, let
$\mathbf G\in\mathbb R^{N\times q}$ collect their outputs and define
$\mathbf A_2=[\mathbf A,\mathbf G]$. With the existing columns
$\mathbf A$, the targets $\mathbf Y$, and $\lambda$ unchanged,
an exact ridge-consistent block update is given by
\cite{zhu2019ridge}
\begin{equation}
\mathcal K_\lambda(\mathbf A_2)=
\begin{bmatrix}
\mathcal K_\lambda(\mathbf A)-\mathbf E\mathbf B^\top\\
\mathbf B^\top
\end{bmatrix},
\label{eq:background-ridge-block-update}
\end{equation}
where
\begin{equation}
\mathbf E=\mathcal K_\lambda(\mathbf A)\mathbf G,
\label{eq:background-ridge-E}
\end{equation}
\begin{equation}
\mathbf R=\mathbf G-\mathbf A\mathbf E,
\label{eq:background-ridge-R}
\end{equation}
\begin{equation}
\mathbf B^\top=
\left[\mathbf R^\top\mathbf R
+\lambda\left(\mathbf I_q+\mathbf E^\top\mathbf E\right)\right]^{-1}
\mathbf R^\top.
\label{eq:background-ridge-B}
\end{equation}
The matrix inverted in \eqref{eq:background-ridge-B} is positive definite
for $\lambda>0$, irrespective of the rank of $\mathbf R$.
Consequently, the output weights for the expanded representation are
\begin{equation}
\mathbf W_2=\mathcal K_\lambda(\mathbf A_2)\mathbf Y=
\begin{bmatrix}
\mathbf W-\mathbf E\mathbf B^\top\mathbf Y\\
\mathbf B^\top\mathbf Y
\end{bmatrix}.
\label{eq:background-ridge-weight-update}
\end{equation}
These identities give the solution of the same ridge-regression problem
with $\mathbf A$ replaced by $\mathbf A_2$. In numerical computation,
the associated positive-definite linear systems can be solved using
Cholesky factors without explicitly forming their inverses.
Section~\ref{sec:hgee} extends the formulation to sample reweighting
and the accumulated representation in HGEE.

\subsection{Error-Correcting Output Codes}
ECOCs represent classes using codewords~\cite{zhang2015heuristic,dietterich2018error}, introducing redundancy to support error-tolerant classification~\cite{yu2024error}. In an ECOC codebook, each row represents a class codeword of length $L$ and each column represents a binary subproblem.
Let the ECOC codebook be $\Omega \in \{-1,1\}^{C \times L }$.
The target space is encoded as follows:
\begin{equation}
    {\ddot{\mathbf{Y}}}  = \mathbf{Y} \Omega.
\end{equation}
Accordingly, the training objective becomes
\begin{equation}
    \underset{ \theta }{ \min }\ \mathcal{L}(\ddot{f}(\mathbf{X};\theta),\ {\ddot{\mathbf{Y}}}),
\end{equation}
where $\ddot{f}(\ \cdot\ ;\theta) : \mathbb{R}^M \rightarrow \mathbb{R}^L$ is the model parameterized with $\theta$ in the ECOC space. During inference, the predicted class is obtained by
\begin{equation}
    \hat y= \underset{c}{ \arg \max }\ \ddot{f}(\mathbf{x};\theta) \cdot \Omega[c],
\end{equation}
where $\Omega[c]$ is the $c$-th class codeword. For bipolar rows, all codewords have norm $\sqrt L$, so this decision equals nearest-Euclidean decoding.

\section{Analysis of ECOC Broad Readouts}
\label{sec:theory}

ECOC changes both the output coordinates learned by a broad classifier and the geometry used to decode them. The analytical BLS solution exposes the relationship between the fitted class scores, the codebook, and the resulting decisions. Robustness then depends on how far weight errors move these outputs relative to their decoding boundaries. The model follows Chen and Liu~\cite{chen2017broad}. We extend the target-linearity analysis of Yu et al.~\cite{yu2024error} from neural tangent kernel predictors to finite ridge fits.

\subsection{ECOC-Induced Decoding Geometry}
\label{sec:ridge-coding-identity}

Consider $C\geq2$ classes and code length $L\geq1$. Let $\mathbf A\in\mathbb R^{N\times d}$ be the training representation. At an input, write its feature row as $\mathbf a(\mathbf x)\in\mathbb R^{1\times d}$. Here $d$ is the total number of feature and enhancement nodes. Fix a diagonal sample-weight matrix $\mathbf D\succeq0$, a regularization coefficient $\lambda>0$, and a target matrix $\mathbf T\in\mathbb R^{N\times C}$. The targets may be one-hot or HGEE confidence-scaled. Define
\begin{equation}\label{eq:theory-ridge-map}
\mathbf W(\mathbf T)
=(\mathbf A^\top\mathbf D\mathbf A+\lambda\mathbf I_d)^{-1}
\mathbf A^\top\mathbf D\mathbf T.
\end{equation}
It uniquely minimizes
$\|\mathbf D^{1/2}(\mathbf A\mathbf V-\mathbf T)\|_F^2+\lambda\|\mathbf V\|_F^2$.

\begin{proposition}[ECOC Decoding Geometry]\label{prop:ecoc-induced-mahalanobis-objective}
Let $\Omega\in\mathbb R^{C\times L}$ have rows $\boldsymbol\omega_c$. Under the same $\mathbf A,\mathbf D,\lambda$, the ridge solutions satisfy
\begin{equation}\label{eq:target-linearity}
\mathbf W(\mathbf T\Omega)=\mathbf W(\mathbf T)\Omega.
\end{equation}
Writing $\mathbf f_T=\mathbf a\mathbf W(\mathbf T)$ and $\mathbf g=\mathbf a\mathbf W(\mathbf T\Omega)$, we therefore have $\mathbf g=\mathbf f_T\Omega$ and
\begin{equation}\label{eq:code-gram-geometry}
\begin{split}
\|\mathbf g-\boldsymbol\omega_c\|_2^2
&=(\mathbf f_T-\mathbf e_c)\mathbf G_\Omega(\mathbf f_T-\mathbf e_c)^\top,\\
\mathbf G_\Omega&=\Omega\Omega^\top\succeq0,
\end{split}
\end{equation}
where $\mathbf e_c$ is the $c$th one-hot row vector. The quadratic form defines a norm squared if and only if $\operatorname{rank}(\Omega)=C$; otherwise it defines a seminorm squared.
\end{proposition}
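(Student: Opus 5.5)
The plan is a direct computation in three steps, using only the closed form \eqref{eq:theory-ridge-map} and basic linear algebra.

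\textbf{Target linearity.} Write $\mathbf P:=(\mathbf A^\top\mathbf D\mathbf A+\lambda\mathbf I_d)^{-1}\mathbf A^\top\mathbf D\in\mathbb R^{d\times N}$. This matrix depends only on $\mathbf A,\mathbf D,\lambda$, and not on the targets. The inverse exists because $\mathbf A^\top\mathbf D\mathbf A\succeq0$ when $\mathbf D\succeq0$, so adding $\lambda\mathbf I_d$ with $\lambda>0$ gives a positive definite matrix. Then
\[
\mathbf W(\mathbf T\Omega)=\mathbf P(\mathbf T\Omega)=(\mathbf P\mathbf T)\Omega=\mathbf W(\mathbf T)\Omega
\]
by associativity, which is \eqref{eq:target-linearity}. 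Left-multiplying by the row $\mathbf a$ gives $\mathbf g=\mathbf f_T\Omega$. Uniqueness of the minimizer is already recorded before the statement, so no variational argument is needed.

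\textbf{Gram-matrix geometry.} The row $\boldsymbol\omega_c$ of $\Omega$ equals $\mathbf e_c\Omega$. Hence
\[
\mathbf g-\boldsymbol\omega_c=(\mathbf f_T-\mathbf e_c)\Omega,
\]
and
\[
\|\mathbf g-\boldsymbol\omega_c\|_2^2=(\mathbf f_T-\mathbf e_c)\Omega\Omega^\top(\mathbf f_T-\mathbf e_c)^\top .
\]
Positive semidefiniteness of $\mathbf G_\Omega$ follows from $\mathbf v\mathbf G_\Omega\mathbf v^\top=\|\mathbf v\Omega\|_2^2\ge0$.

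\textbf{Norm versus seminorm.} The map $\mathbf v\mapsto(\mathbf v\mathbf G_\Omega\mathbf v^\top)^{1/2}=\|\mathbf v\Omega\|_2$ on $\mathbb R^{1\times C}$ is always a seminorm. It is homogeneous, and the triangle inequality is inherited from the Euclidean norm through the linear map $\mathbf v\mapsto\mathbf v\Omega$. It is a norm exactly when $\mathbf v\Omega=0$ forces $\mathbf v=0$, that is, when the rows of $\Omega$ are linearly independent. This condition is equivalent to $\operatorname{rank}(\Omega)=C$, and also to $\mathbf G_\Omega\succ0$. If the rank is below $C$, a nonzero left null vector $\mathbf v$ gives $\|\mathbf v\Omega\|_2=0$, so the form is only a seminorm.

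\textbf{Where care is needed.} None of these steps is a genuine obstacle. The only points to state carefully are the invertibility argument for $\mathbf D\succeq0$, which may be singular, and the correct identification of the kernel of $\mathbf v\mapsto\mathbf v\Omega$ with the left null space of $\Omega$.
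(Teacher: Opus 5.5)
Your proposal is correct and follows the paper's proof essentially step for step. The one cosmetic difference is in the first step: you get $\mathbf W(\mathbf T\Omega)=\mathbf W(\mathbf T)\Omega$ by associativity of the explicit, target-independent operator $(\mathbf A^\top\mathbf D\mathbf A+\lambda\mathbf I_d)^{-1}\mathbf A^\top\mathbf D$, whereas the paper right-multiplies the normal equation by $\Omega$ and invokes uniqueness, which amounts to the same thing. The substitution $\boldsymbol\omega_c=\mathbf e_c\Omega$ and the full-row-rank criterion through $\mathbf v\mathbf G_\Omega\mathbf v^\top=\|\mathbf v\Omega\|_2^2$ match the paper exactly.
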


\begin{proof}
The normal equation for target $\mathbf T$ is
\begin{equation}\label{eq:ridge-normal-equation}
(\mathbf A^\top\mathbf D\mathbf A+\lambda\mathbf I_d)\mathbf W(\mathbf T)
=\mathbf A^\top\mathbf D\mathbf T.
\end{equation}
Its coefficient matrix is positive definite: for every nonzero $\mathbf v\in\mathbb R^d$,
\begin{equation*}
\begin{split}
&\mathbf v^\top(\mathbf A^\top\mathbf D\mathbf A+\lambda\mathbf I_d)\mathbf v\\
&\qquad=\|\mathbf D^{1/2}\mathbf A\mathbf v\|_2^2
+\lambda\|\mathbf v\|_2^2>0.
\end{split}
\end{equation*}
Multiplying \eqref{eq:ridge-normal-equation} on the right by $\Omega$ gives the normal equation for target $\mathbf T\Omega$. Uniqueness yields \eqref{eq:target-linearity}. Substituting $\boldsymbol\omega_c=\mathbf e_c\Omega$ into the Euclidean distance gives \eqref{eq:code-gram-geometry}. Finally, for a row $\mathbf v\in\mathbb R^{1\times C}$,
$\mathbf v\mathbf G_\Omega\mathbf v^\top=\|\mathbf v\Omega\|_2^2$ is strictly positive for every nonzero $\mathbf v$ exactly when $\Omega$ has full row rank.
\end{proof}

The same Gram matrix weights the complete encoded ridge objective. For $\mathbf V\in\mathbb R^{d\times C}$, put $\mathbf E_{\mathbf V}=\mathbf A\mathbf V-\mathbf T$. Parameterizing the encoded readout by $\mathbf V\Omega$ gives the codebook-weighted quadratic form
\begin{equation}\label{eq:codebook-weighted-objective}
\begin{split}
\mathcal J_\Omega(\mathbf V)
&=\|\mathbf D^{1/2}\mathbf E_{\mathbf V}\Omega\|_F^2
+\lambda\|\mathbf V\Omega\|_F^2\\
&=\operatorname{tr}(\mathbf E_{\mathbf V}^\top
\mathbf D\mathbf E_{\mathbf V}\mathbf G_\Omega)\\
&\quad+\lambda\operatorname{tr}(\mathbf V^\top\mathbf V\mathbf G_\Omega).
\end{split}
\end{equation}
This follows from the Frobenius trace identity and cyclic invariance of the trace. Both the residual and the regularizer are weighted by $\mathbf G_\Omega$. By \eqref{eq:target-linearity}, the parameterization attains the unique encoded minimizer $\mathbf W(\mathbf T\Omega)=\mathbf W(\mathbf T)\Omega$. When $\mathbf G_\Omega$ is positive definite, the corresponding $\mathbf V$ is uniquely $\mathbf W(\mathbf T)$. When $\mathbf G_\Omega$ is singular, multiple values of $\mathbf V$ yield the same encoded weights.

The coding and distance identities also hold for fitted weights of the form $\mathbf W_{\mathbf K}(\mathbf T)=\mathbf K\mathbf T$ with a fixed $\mathbf K\in\mathbb R^{d\times N}$. This follows from $\mathbf K(\mathbf T\Omega)=(\mathbf K\mathbf T)\Omega$. This includes the block-weighted solver developed in Section~\ref{sec:hgee}.

For a bipolar codebook, every row has squared norm $L$, so the decoder is $\arg\max_c(\mathbf f_T\mathbf G_\Omega)_c$. Its score for class $c$ is
\begin{equation}\label{eq:code-score-mixing}
(\mathbf f_T\mathbf G_\Omega)_c
=L f_{T,c}+\sum_{j\ne c}f_{T,j}(\mathbf G_\Omega)_{jc}.
\end{equation}
The diagonal term preserves the class's own fitted score, while codeword correlations mix in scores from the other classes. If $\mathbf G_\Omega=\alpha\mathbf I_C$ with $\alpha>0$, decoding agrees with one-hot nearest-neighbor decoding. The off-diagonal terms explain how a nonorthogonal codebook can change the class ordering even when the broad representation is unchanged. Thus, for a fixed representation and target matrix, the effect of encoding can be understood through the fitted class scores and their transformation by the codebook. Changing the representation changes the scores on which this transformation acts.

\subsection{Output Perturbation Bounds}
\label{sec:output-perturbation-bounds}

Consider a trained readout
$\mathbf g=\mathbf a\mathbf W\in\mathbb R^{1\times L}$,
with $\mathbf W\in\mathbb R^{d\times L}$. Its perturbed output is
\begin{equation}
\widetilde{\mathbf g}=(\mathbf a+\Delta\mathbf a)
(\mathbf W+\Delta\mathbf W).
\end{equation}
The node perturbation $\Delta\mathbf a$ accounts for errors upstream of the readout. The operator norm $\|\cdot\|_{\mathrm{op}}$ below is the matrix spectral norm.

\begin{theorem}[Finite Output Perturbation Bound]\label{thm:bounded-output-bias}
If $\|\Delta\mathbf a\|_2\le\varepsilon_a$ and
$\|\Delta\mathbf W\|_{\mathrm{op}}\le\varepsilon_W$, then
\begin{equation}\label{eq:bound_of_BLS}
\begin{split}
\frac{\|\widetilde{\mathbf g}-\mathbf g\|_2}{\sqrt L}
&\le r(\mathbf x),\\
r(\mathbf x)&=
\frac{\varepsilon_a\|\mathbf W\|_{\mathrm{op}}
+(\|\mathbf a\|_2+\varepsilon_a)\varepsilon_W}{\sqrt L}.
\end{split}
\end{equation}
For a fixed trained model and input, $r(\mathbf x)\to0$ as both perturbation budgets tend to zero.
\end{theorem}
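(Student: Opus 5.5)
The plan is a direct algebraic expansion followed by the triangle inequality and the submultiplicativity of the spectral norm for row vectors. First expand the perturbed output:
\begin{equation*}
\widetilde{\mathbf g}-\mathbf g
=\Delta\mathbf a\,\mathbf W+\mathbf a\,\Delta\mathbf W+\Delta\mathbf a\,\Delta\mathbf W
=\Delta\mathbf a\,\mathbf W+(\mathbf a+\Delta\mathbf a)\Delta\mathbf W .
\end{equation*}
Grouping the last two terms as $(\mathbf a+\Delta\mathbf a)\Delta\mathbf W$ is deliberate. It yields the factor $(\|\mathbf a\|_2+\varepsilon_a)\varepsilon_W$ in the statement directly, instead of a separate cross term $\varepsilon_a\varepsilon_W$, although the two forms are equivalent after regrouping.

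Next, bound each piece using the fact that for a row vector $\mathbf v\in\mathbb R^{1\times d}$ and a matrix $\mathbf M\in\mathbb R^{d\times L}$ we have $\|\mathbf v\mathbf M\|_2=\|\mathbf M^\top\mathbf v^\top\|_2\le\|\mathbf M^\top\|_{\mathrm{op}}\|\mathbf v\|_2=\|\mathbf M\|_{\mathrm{op}}\|\mathbf v\|_2$, since the spectral norm is invariant under transposition. This gives $\|\Delta\mathbf a\,\mathbf W\|_2\le\varepsilon_a\|\mathbf W\|_{\mathrm{op}}$. Combining the same fact with $\|\mathbf a+\Delta\mathbf a\|_2\le\|\mathbf a\|_2+\varepsilon_a$ gives $\|(\mathbf a+\Delta\mathbf a)\Delta\mathbf W\|_2\le(\|\mathbf a\|_2+\varepsilon_a)\varepsilon_W$. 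The triangle inequality then bounds $\|\widetilde{\mathbf g}-\mathbf g\|_2$ by the sum, and dividing by $\sqrt L$ yields \eqref{eq:bound_of_BLS}.

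For the limiting statement, note that with the model and input fixed, $\mathbf W$ and $\mathbf a=\mathbf a(\mathbf x)$ are constants. Then $r(\mathbf x)$ is a polynomial in $(\varepsilon_a,\varepsilon_W)$ with no constant term, namely $\varepsilon_a\|\mathbf W\|_{\mathrm{op}}/\sqrt L+(\|\mathbf a\|_2+\varepsilon_a)\varepsilon_W/\sqrt L$, so it tends to $0$ as both budgets vanish.

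No step is a genuine obstacle. The only point needing care is the row-vector convention: the operator norm must act on the correct side, which the transpose argument settles. One should also keep the bound exact rather than first-order, so that the second-order term $\Delta\mathbf a\,\Delta\mathbf W$ is absorbed into $(\|\mathbf a\|_2+\varepsilon_a)\varepsilon_W$ rather than dropped.
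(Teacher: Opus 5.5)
Your proposal is correct and follows the same route as the paper: you regroup the difference as $\Delta\mathbf a\,\mathbf W+(\mathbf a+\Delta\mathbf a)\Delta\mathbf W$, bound each term with $\|\mathbf v\mathbf M\|_2\le\|\mathbf v\|_2\|\mathbf M\|_{\mathrm{op}}$ and the triangle inequality, and divide by $\sqrt L$. Your transpose justification of the row-vector norm inequality and your explicit check of the limit (which the paper leaves implicit) are fine additions.
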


\begin{proof}
Expand the output difference as
\begin{equation}
\begin{split}
\widetilde{\mathbf g}-\mathbf g
&=\Delta\mathbf a\mathbf W+\mathbf a\Delta\mathbf W
+\Delta\mathbf a\Delta\mathbf W\\
&=\Delta\mathbf a\mathbf W
+(\mathbf a+\Delta\mathbf a)\Delta\mathbf W.
\end{split}
\end{equation}
Apply the triangle inequality and
$\|\mathbf v\mathbf M\|_2\le\|\mathbf v\|_2\|\mathbf M\|_{\mathrm{op}}$,
then divide by $\sqrt L$.
\end{proof}

\subsubsection{Trained readout norm}
\label{sec:trained-weight-norm}
For the common-weight ridge fit in \eqref{eq:theory-ridge-map}, put $\mathbf B=\mathbf D^{1/2}\mathbf A$ and $\mathbf Q=\mathbf D^{1/2}\mathbf T\Omega$. The encoded weights are
\begin{equation}
\begin{split}
\mathbf K_{\mathrm w}
&=(\mathbf B^\top\mathbf B+\lambda\mathbf I_d)^{-1}\mathbf B^\top,\\
\mathbf W=\mathbf W(\mathbf T\Omega)&=\mathbf K_{\mathrm w}\mathbf Q.
\end{split}
\end{equation}
Let $s$ range over the positive singular values of $\mathbf B$. A singular value decomposition shows that the nonzero singular values of $\mathbf K_{\mathrm w}$ are $s/(s^2+\lambda)$. Its remaining singular values are zero. Since $(s-\sqrt\lambda)^2\ge0$,
\begin{equation}
\|\mathbf K_{\mathrm w}\|_{\mathrm{op}}
\le\frac{1}{2\sqrt\lambda}.
\end{equation}
Consequently,
\begin{equation}\label{eq:ridge-weight-norm}
\|\mathbf W\|_F
=\|\mathbf K_{\mathrm w}\mathbf Q\|_F
\le\frac{\|\mathbf D^{1/2}\mathbf T\Omega\|_F}{2\sqrt\lambda}.
\end{equation}
The same argument gives
$\|\mathbf W\|_{\mathrm{op}}\le\|\mathbf Q\|_{\mathrm{op}}/(2\sqrt\lambda)$.
For a general target-linear map, the corresponding estimate is
\begin{equation}\label{eq:fixed-operator-weight-norm}
\begin{split}
\|\mathbf W_{\mathbf K}(\mathbf T\Omega)\|_F
&=\|\mathbf K\mathbf T\Omega\|_F\\
&\le\|\mathbf K\|_{\mathrm{op}}\|\mathbf T\Omega\|_F.
\end{split}
\end{equation}
The two terms in Theorem~\ref{thm:bounded-output-bias} distinguish how errors propagate through the fitted model. The readout norm $\|\mathbf W\|_{\mathrm{op}}$ amplifies upstream node errors. The node magnitude $\|\mathbf a\|_2$ controls the effect of readout-weight errors. Equations~\eqref{eq:ridge-weight-norm} and \eqref{eq:fixed-operator-weight-norm} show how confidence-scaled targets affect the readout norm. Target scaling thus changes the clean output geometry as well as the weights that determine perturbation tolerance.

\subsubsection{From parameter errors to node perturbations}
\label{sec:feature-budget}
Consider a fixed input $\mathbf x\in\mathbb R^{1\times d_{\mathrm{in}}}$ and a broad feature block
\begin{equation}\label{eq:single-input-feature-map}
\begin{split}
\mathbf z&=\phi(\mathbf x\mathbf U+\mathbf b),\\
\mathbf h&=\xi(\mathbf z\mathbf V+\mathbf c),\qquad
\mathbf a=[\mathbf z,\mathbf h].
\end{split}
\end{equation}
Here $\mathbf U\in\mathbb R^{d_{\mathrm{in}}\times n_z}$ and $\mathbf V\in\mathbb R^{n_z\times n_h}$, where $n_z$ and $n_h$ are the numbers of feature and enhancement nodes, respectively. Suppose the entrywise activations are Lipschitz with constants $\ell_\phi,\ell_\xi$, and
\begin{equation}\label{eq:upstream-parameter-budgets}
\begin{aligned}
\|\Delta\mathbf U\|_{\mathrm{op}}&\le u,&
\|\Delta\mathbf b\|_2&\le b_0,\\
\|\Delta\mathbf V\|_{\mathrm{op}}&\le v,&
\|\Delta\mathbf c\|_2&\le c_0.
\end{aligned}
\end{equation}
Define
\begin{equation}\label{eq:node-error-budgets}
\begin{split}
e_z&=\ell_\phi(\|\mathbf x\|_2u+b_0),\\
e_h&=\ell_\xi\bigl[e_z\|\mathbf V\|_{\mathrm{op}}
+(\|\mathbf z\|_2+e_z)v+c_0\bigr].
\end{split}
\end{equation}
The feature preactivation changes by $\mathbf x\Delta\mathbf U+\Delta\mathbf b$, and the enhancement preactivation changes by
$\Delta\mathbf z\mathbf V+(\mathbf z+\Delta\mathbf z)\Delta\mathbf V+\Delta\mathbf c$.
Lipschitz continuity and the norm inequalities give
$\|\Delta\mathbf z\|_2\le e_z$ and $\|\Delta\mathbf h\|_2\le e_h$.
Concatenation then yields the valid node budget
\begin{equation}\label{eq:node-budget}
\varepsilon_a=\sqrt{e_z^2+e_h^2}.
\end{equation}
These budgets vanish with the parameter-error budgets. The identity activation has Lipschitz constant one. When only output weights are perturbed, $\varepsilon_a=0$.

\subsubsection{Gaussian errors in the output weights}
\label{sec:gaussian-head}
Hold $\mathbf a$ fixed and let entries of $\Delta\mathbf W$ be independent $\mathcal N(0,\sigma^2)$ variables. Each coordinate of $\mathbf a\Delta\mathbf W$ is then an independent Gaussian variable with variance $\sigma^2\|\mathbf a\|_2^2$. For $\sigma\|\mathbf a\|_2>0$, the sum of their standardized squares has a chi-square distribution~\cite{lancaster2005chi}:
\begin{equation}
\frac{\|\mathbf a\Delta\mathbf W\|_2^2}{\sigma^2\|\mathbf a\|_2^2}
\sim\chi_L^2.
\end{equation}
For $\eta\in(0,1)$, let $F^{-1}_{\chi_L^2}(1-\eta)$ be its quantile. The normalized output perturbation is at most
\begin{equation}\label{eq:gaussian-radius}
r_\eta=\sigma\|\mathbf a\|_2
\sqrt{F^{-1}_{\chi_L^2}(1-\eta)/L}
\end{equation}
with probability $1-\eta$; if $\sigma\|\mathbf a\|_2=0$, it is zero.
Since $\chi_L^2/L$ concentrates at one, $r_\eta$ approaches $\sigma\|\mathbf a\|_2$ as $L$ grows, for fixed $\mathbf a$, $\sigma$, and $\eta$. The decoding condition below compares this perturbation scale with the margin of the fitted output.

\subsection{Robust Decoding under Weight Errors}
\label{sec:decoding-proof}

Assume a fixed codebook with pairwise distinct rows and a unique clean nearest-codeword decision
$c_*=\arg\min_c\|\mathbf g-\boldsymbol\omega_c\|_2$.
Define
\begin{equation}\label{eq:exact-decoding-margin}
\rho(\mathbf g;\Omega)
=\min_{c\ne c_*}
\frac{\|\mathbf g-\boldsymbol\omega_c\|_2^2
-\|\mathbf g-\boldsymbol\omega_{c_*}\|_2^2}
{2\sqrt L\,\|\boldsymbol\omega_c-\boldsymbol\omega_{c_*}\|_2}.
\end{equation}
To identify this quantity geometrically, put
$\mathbf v_c=\boldsymbol\omega_c-\boldsymbol\omega_{c_*}$ and
$\mathbf q=\mathbf g-\boldsymbol\omega_{c_*}$.
The difference of the clean squared distances is
$\|\mathbf v_c\|_2^2-2\langle\mathbf q,\mathbf v_c\rangle>0$.
The hyperplane where those distances agree has normal vector $\mathbf v_c$. Its distance from $\mathbf g$, normalized by $\sqrt L$, is
\begin{equation}\label{eq:boundary-radius}
\rho_c=
\frac{\|\mathbf v_c\|_2^2-2\langle\mathbf q,\mathbf v_c\rangle}
{2\sqrt L\,\|\mathbf v_c\|_2}.
\end{equation}
Thus $\rho=\min_{c\ne c_*}\rho_c$. In normalized output coordinates, the open ball of radius $\rho$ lies inside the clean class's Voronoi region, while its nearest boundary contains a tie. Hence $\rho$ is the radius of the largest open ball that preserves the decision.

\begin{corollary}[Preservation of the Clean Decoded Class]\label{corollary:robust-decoding}
The budgets in Theorem~\ref{thm:bounded-output-bias} preserve $c_*$ whenever
\begin{equation}\label{eq:robustness-condition}
r(\mathbf x)<\rho(\mathbf g(\mathbf x);\Omega).
\end{equation}
\end{corollary}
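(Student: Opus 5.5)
The plan is to combine Theorem~\ref{thm:bounded-output-bias} with the hyperplane description of $\rho$ given just before the statement. By the theorem, the perturbed output satisfies $\|\widetilde{\mathbf g}-\mathbf g\|_2\le\sqrt L\,r(\mathbf x)$. It therefore suffices to show that every output $\widetilde{\mathbf g}$ with $\|\widetilde{\mathbf g}-\mathbf g\|_2<\sqrt L\,\rho$ is still decoded uniquely as $c_*$. The strict inequality $r(\mathbf x)<\rho$ places $\widetilde{\mathbf g}$ in that open ball.

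For the key step, fix $c\ne c_*$ and write $\boldsymbol\delta=\widetilde{\mathbf g}-\mathbf g$. I would expand the difference of squared distances at the perturbed point. Using $\mathbf v_c=\boldsymbol\omega_c-\boldsymbol\omega_{c_*}$ and $\mathbf q=\mathbf g-\boldsymbol\omega_{c_*}$, we have
\begin{equation*}
\|\widetilde{\mathbf g}-\boldsymbol\omega_c\|_2^2-\|\widetilde{\mathbf g}-\boldsymbol\omega_{c_*}\|_2^2
=\|\mathbf v_c\|_2^2-2\langle\mathbf q+\boldsymbol\delta,\mathbf v_c\rangle .
\end{equation*}
The quadratic terms in $\boldsymbol\delta$ cancel, so the expression is affine in $\boldsymbol\delta$. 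By the definition of $\rho_c$ in \eqref{eq:boundary-radius}, it equals $2\sqrt L\,\|\mathbf v_c\|_2\,\rho_c-2\langle\boldsymbol\delta,\mathbf v_c\rangle$.

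Cauchy--Schwarz gives $\langle\boldsymbol\delta,\mathbf v_c\rangle\le\|\boldsymbol\delta\|_2\|\mathbf v_c\|_2\le\sqrt L\,r(\mathbf x)\|\mathbf v_c\|_2$. The difference is therefore at least $2\sqrt L\,\|\mathbf v_c\|_2\,(\rho_c-r(\mathbf x))$. Since $\rho\le\rho_c$ and $r(\mathbf x)<\rho$, this lower bound is strictly positive. Here $\|\mathbf v_c\|_2>0$ because the codebook rows are pairwise distinct. As this holds for every $c\ne c_*$, $\boldsymbol\omega_{c_*}$ is the unique nearest codeword to $\widetilde{\mathbf g}$, so the decision is preserved.

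The main point needing care is consistency with the $\arg\max$ decoder of Section~\ref{sec:related_work}. For bipolar (equal-norm) codebooks it coincides with nearest-Euclidean decoding, so the result transfers directly. For general rows I would state that the corollary refers to nearest-codeword decoding, which is how $c_*$ and $\rho$ are defined.

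A secondary remark is that the argument needs only the bound on $\|\widetilde{\mathbf g}-\mathbf g\|_2$, not its structure. The corollary therefore also holds with $r$ replaced by the Gaussian radius $r_\eta$ of \eqref{eq:gaussian-radius}, in which case preservation holds with probability at least $1-\eta$.
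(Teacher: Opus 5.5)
Your proof is correct and follows the paper's argument. Like the paper, you write the perturbed difference of squared distances as $2\sqrt L\,\|\mathbf v_c\|_2\rho_c-2\langle\boldsymbol\delta,\mathbf v_c\rangle$, then apply Cauchy--Schwarz with the bound from Theorem~\ref{thm:bounded-output-bias} to get a positive lower bound, $2\sqrt L\,\|\mathbf v_c\|_2(\rho_c-r)\ge 2\sqrt L\,\|\mathbf v_c\|_2(\rho-r)>0$.
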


\begin{proof}
Let $\mathbf e=\widetilde{\mathbf g}-\mathbf g$. For any $c\ne c_*$, the difference of the two squared codeword distances changes by $-2\langle\mathbf e,\mathbf v_c\rangle$. Using Theorem~\ref{thm:bounded-output-bias}, its perturbed value is at least
$2\sqrt L\|\mathbf v_c\|_2(\rho-r)>0$.
Thus every competing codeword remains farther away.
\end{proof}

For Gaussian output-weight errors, the same argument with $r_\eta$ from \eqref{eq:gaussian-radius} shows that $r_\eta<\rho$ preserves $c_*$ with probability at least $1-\eta$.

To relate this certificate to the usual ECOC condition, define
\begin{equation}
\begin{split}
D_*&=\min_{c\ne c_*}
\frac{\|\boldsymbol\omega_c-\boldsymbol\omega_{c_*}\|_2}{\sqrt L},\\
U&=\frac{\|\mathbf g-\boldsymbol\omega_{c_*}\|_2}{\sqrt L}.
\end{split}
\end{equation}
Cauchy--Schwarz gives
\begin{equation}
\rho_c\ge\frac{\|\mathbf v_c\|_2}{2\sqrt L}
-\frac{\|\mathbf q\|_2}{\sqrt L},
\qquad
\rho\ge D_*/2-U.
\end{equation}
Consequently, $D_*/2>U+r$ is sufficient for preserving the decision, recovering the geometric condition used by Yu et al.~\cite{yu2024error}. The exact boundary distance in \eqref{eq:exact-decoding-margin} also accommodates confidence-scaled predictions that lie far from the class codewords.

For equal-norm codewords, \eqref{eq:boundary-radius} simplifies to
\begin{equation}
\rho_c=
\frac{\langle\mathbf g,\boldsymbol\omega_{c_*}-\boldsymbol\omega_c\rangle}
{\sqrt L\,\|\mathbf v_c\|_2}.
\end{equation}
Positive rescaling of an output row scales this margin by the same factor and preserves its class. For example, consider bipolar codewords and $\mathbf g=\kappa\boldsymbol\omega_{c_*}$ for $\kappa>0$. The exact radius is $\rho=\kappa D_*/2$, whereas the distance to the codeword is $U=|\kappa-1|$. As $\kappa$ increases above one, the output moves farther from its codeword and farther from every decision boundary. This distinction makes the actual boundary distance useful for analyzing predictions learned from targets of different magnitudes, as in confidence scaling.

\subsection{Connection to HiER-BLS}
\label{sec:design_implications}
\label{sec:ensemble-pruning}

The gap $\rho-r$ relates the decoding geometry to the trained model. The representation and target scaling determine the fitted outputs and their sensitivity; the codebook determines how those outputs separate the classes. In HiER-BLS, HGEE guides the feature views and target scaling of error-correcting bases (ECBs). The analysis therefore connects this guidance to the quantities that govern encoded decisions. We next express the perturbation bounds and class scores for the fused features and retained outputs used by these bases.

\subsubsection{Fused ECB features}
For fixed nonnegative fusion coefficients, write
\begin{equation}
\mathbf z=w_{\mathrm{pri}}\mathbf z_{\mathrm{pri}}
+w_{\mathrm{aux}}\mathbf z_{\mathrm{aux}}.
\end{equation}
If $\|\Delta\mathbf z_{\mathrm{pri}}\|_2\le e_{\mathrm{pri}}$ and
$\|\Delta\mathbf z_{\mathrm{aux}}\|_2\le e_{\mathrm{aux}}$, the triangle inequality gives
\begin{equation}\label{eq:fused-feature-budget}
\begin{split}
e_z&:=w_{\mathrm{pri}}e_{\mathrm{pri}}
+w_{\mathrm{aux}}e_{\mathrm{aux}},\\
\|\Delta\mathbf z\|_2&\le e_z.
\end{split}
\end{equation}
Using this $e_z$ in the enhancement bound \eqref{eq:node-error-budgets} and the concatenation formula \eqref{eq:node-budget} propagates the two subspace budgets to the broad readout.

\subsubsection{Pruned ensembles}
Consider a fixed final ensemble of $B$ fitted bases. Let $L_b$ be the retained length of base $b$. The set $\mathcal B_+=\{b\in\{1,\ldots,B\}:L_b>0\}$ indexes the bases with retained columns. Their output rows $\mathbf g_b$ concatenate to $\mathbf g$, with total code length $L_{\mathrm{tot}}=\sum_{b\in\mathcal B_+}L_b>0$. If
$\|\Delta\mathbf g_b\|_2/\sqrt{L_b}\le r_b$ for every $b\in\mathcal B_+$, then
\begin{equation}\label{eq:ensemble-output-bound}
\begin{split}
\|\Delta\mathbf g\|_2^2
&=\sum_{b\in\mathcal B_+}\|\Delta\mathbf g_b\|_2^2
\le\sum_{b\in\mathcal B_+}L_br_b^2,\\
\frac{\|\Delta\mathbf g\|_2}{\sqrt{L_{\mathrm{tot}}}}
&\le\left(
\frac{\sum_{b\in\mathcal B_+}L_br_b^2}{L_{\mathrm{tot}}}
\right)^{1/2}.
\end{split}
\end{equation}
For probabilistic block bounds, simultaneous control follows, for example, from a union bound. Corollary~\ref{corollary:robust-decoding} then applies to the concatenated codebook under its stated assumptions.

The corresponding class scores also decompose over the bases. Let $\Omega_b$ contain the retained columns for base $b$, and let $\mathbf f_{T,b}=\mathbf a_b\mathbf K_b\mathbf T$ be its unencoded score row. Target-linearity gives $\mathbf g_b=\mathbf f_{T,b}\Omega_b$, so the ensemble score row is
\begin{equation}\label{eq:ensemble-class-scores}
\mathbf s=\sum_{b\in\mathcal B_+}\mathbf g_b\Omega_b^\top
=\sum_{b\in\mathcal B_+}\mathbf f_{T,b}(\Omega_b\Omega_b^\top).
\end{equation}
Codeword correlations transform the fitted class scores from each feature view. The ensemble then sums these transformed scores. A retained column $j$ of base $b$ contributes $g_{b,j}((\Omega_b)_{c_*j}-(\Omega_b)_{cj})$ to the score difference between classes $c_*$ and $c$. A code column's value therefore depends on its learned output. This motivates PCP's joint evaluation of fitted outputs and codebooks. Retaining matching columns preserves each factorization. The perturbation bound and decoding margin can then be evaluated in the retained output space.

\section{Methodology}
\label{sec:method}
HiER-BLS couples the choice of inputs for new feature blocks with the learning of error-correcting outputs, as shown in Fig.~\ref{fig:framework}. HGEE progressively focuses new blocks on selected inputs while keeping earlier representations available to the readout. Its selected subspace size and class confidence then guide ECEO's feature views and training targets. This lets encoded learning build on the evolving branch's assessment of feature importance and sample difficulty. ECEO selects useful learned code outputs, and BADF combines the two branches' scores for the final decision.

% Architecture asset copied byte-for-byte from submission_20260623/Manuscript.
\begin{figure*}[t]
\centering
\includegraphics[width=\linewidth]{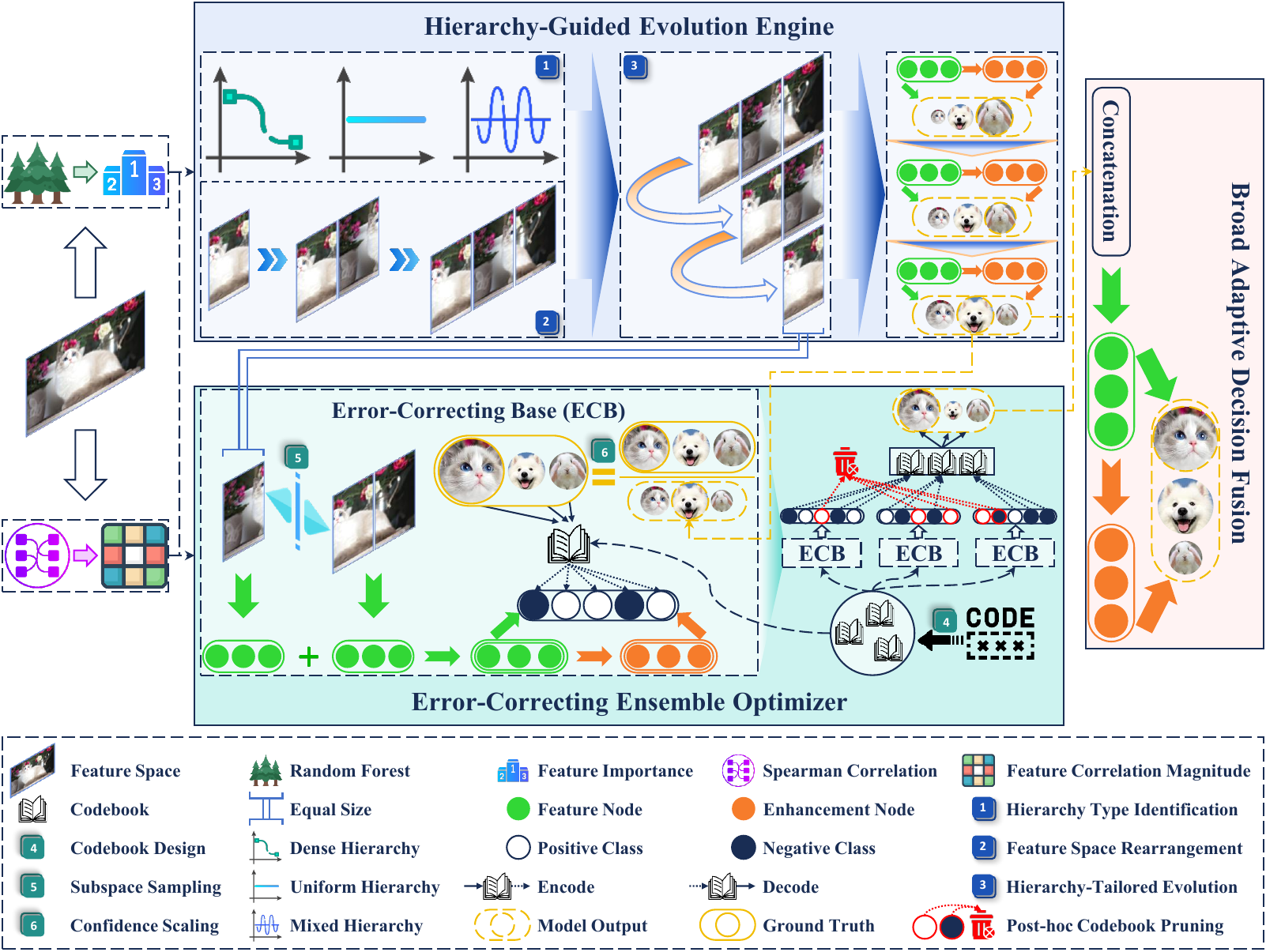}
\caption{Architecture of HiER-BLS. HGEE constructs incremental features using importance-guided subspaces. ECEO trains encoded bases and prunes output columns. BADF combines the branches' scores through an additional BLS.}
\label{fig:framework}
\end{figure*}

\subsection{Hierarchy-Guided Evolution}
\label{sec:hgee}
HGEE directs new capacity toward informative inputs with limited redundancy. Feature ordering determines which coordinates remain in later subspaces, while the importance profile guides how quickly those subspaces shrink. Earlier blocks remain in the model, so later blocks can focus on selected inputs while the readout retains access to broader representations. Let $\mathbf X\in\mathbb R^{N\times M}$ contain the training inputs, with $M>1$. Random-forest feature importance~\cite{breiman2001random} gives nonnegative values $v_i$. Define the entropy-based importance score
\begin{equation}
H=-\frac{\sum_{i=1}^M v_i\log v_i}{\log M},\qquad 0\log0:=0.
\label{eq:importance-entropy}
\end{equation}
Hierarchy Type Identification (HTI) assigns the dense type when $H\leq\tau_{\mathrm{low}}$, the uniform type when $H\geq\tau_{\mathrm{high}}$, and the mixed type otherwise, with $\tau_{\mathrm{low}}=0.1$ and $\tau_{\mathrm{high}}=0.4$. These types determine the schedule family below.

For nonconstant input columns, let $\mathbf{FCM}\in\mathbb R^{M\times M}$ contain the magnitudes of the Spearman correlations~\cite{spearman1961proof}. Feature Space Rearrangement (FSR) begins with the feature of largest importance. Given the ordered list of selected indices $\mathcal Q$ and remaining indices $\mathcal S$, define the cumulative correlation $\mathrm{LUM}_i$ and local priority $\mathrm{LP}_i$ by
\begin{align}
\mathrm{LUM}_i&=\sum_{q\in\mathcal Q}\mathbf{FCM}[i,q],\\
\mathrm{LP}_i&=\frac{\exp\!\left(v_i/(\mathrm{LUM}_i+\epsilon)\right)}
{\sum_{j\in\mathcal S}\exp\!\left(v_j/(\mathrm{LUM}_j+\epsilon)\right)},\quad i\in\mathcal S.
\end{align}
Here $\epsilon>0$ stabilizes the denominators. The largest-priority remaining feature is appended until all features are ordered. The rule greedily favors importance while penalizing correlation with already selected features. Write the reordered matrix as $\mathbf X_1=\mathbf X[:,\mathcal Q]$ and its prefix as $\mathbf X_t=\mathbf X_1[:,1:d_t]\in\mathbb R^{N\times d_t}$.

\subsubsection{Subspace schedules}
Table~\ref{tab:evolution-schedules} maps the importance profile to a subspace schedule. Concentrated profiles motivate faster truncation, whereas diffuse profiles favor retaining more coordinates in later steps.
\begin{table}[t]
\centering
\caption{Raw subspace schedules selected by the importance profile.}
\label{tab:evolution-schedules}
\setlength{\tabcolsep}{3pt}
\begin{tabular}{@{}lll@{}}
\toprule
Importance profile & Schedule & Raw size $s_t$\\
\midrule
Dense & Exponential & $M\gamma^{t-1}$\\
Uniform & Linear & $M[1-\alpha(t-1)]$\\
Mixed & Cosine & $\frac M2[1+\cos(\pi(t-1)/T)]$\\
\bottomrule
\end{tabular}
\end{table}
The parameters satisfy $0<\gamma\leq1$, $\alpha\geq0$, and $T>0$. At each admissible step,
\begin{equation}
d_t=\lfloor s_t\rfloor,\qquad 1\leq d_t\leq M,
\label{eq:schedule-boundary}
\end{equation}
and the cosine schedule uses its decreasing segment $0\leq t-1\leq T$.

\subsubsection{Incremental feature construction}
Initially, feature windows and enhancement windows are formed as
\begin{align}
\mathbf Z_{1,i}&=\phi(\mathbf X_1\mathbf W_{e_{1,i}}+\mathbf1_N\bm\beta_{e_{1,i}}),
\label{eq:feature-nodes}\\
\mathbf H_{1,j}&=\xi(\mathbf Z_1\mathbf W_{h_{1,j}}+\mathbf1_N\bm\beta_{h_{1,j}}),
\label{eq:enhancement-nodes}
\end{align}
where $\mathbf Z_1$ and $\mathbf H_1$ concatenate their respective windows. Denote their total scalar widths at step $t$ by $r_t$ and $s_t^{\mathrm h}$, and put $q_t=r_t+s_t^{\mathrm h}$.

Shrinking the input prefix changes the coordinates available to a new block. Subspace alignment uses the preceding block to construct the new mapping, carrying information from the preceding representation into this transition. At step $t$, it solves
\begin{equation}
\min_{\mathbf V_t\in\mathbb R^{r_{t-1}\times d_t}}
\|\mathbf X_t-\mathbf Z_{t-1}\mathbf V_t\|_F^2
+\lambda\|\mathbf V_t\|_1.
\label{eq:alignment}
\end{equation}
Here $\|\mathbf V_t\|_1$ is the entrywise $\ell_1$ norm, and $\lambda$ is the regularization coefficient. The new feature block is $\mathbf Z_t=\mathbf X_t\mathbf V_t^\top\in\mathbb R^{N\times r_{t-1}}$, so $r_t=r_{t-1}$. Enhancement windows are then generated from $\mathbf Z_t$ as in Eq.~\eqref{eq:enhancement-nodes}. The reconstruction step learns how the preceding features represent the retained input coordinates. The transposed coefficients then map those coordinates into a new block of the same feature width.

\subsubsection{Incremental readout learning}
Let $\mathbf F_t=[\mathbf Z_t,\mathbf H_t]\in\mathbb R^{N\times q_t}$ be the unweighted block. AdaBoost sample reweighting~\cite{freund1997decision} adjusts its training emphasis using the preceding model's class probabilities. For each sample $i$ and incorrect class $c\ne y_i$, maintain a pair weight $u_{t,ic}$, initialized as $u_{1,ic}=1/[N(C-1)]$. The sample weights and their diagonal matrix are
\begin{equation}\label{eq:sample-weight-normalization}
\begin{split}
w_{t,i}&=\frac{\sum_{c\ne y_i}u_{t,ic}}
{\sum_{k=1}^N\sum_{j\ne y_k}u_{t,kj}},\\
\mathbf D_t&=\operatorname{diag}(w_{t,1},\ldots,w_{t,N}),
\end{split}
\end{equation}
so $\mathbf D_1=\mathbf I/N$. At step $t\geq2$, set $\mathbf P^{(t-1)}=\operatorname{softmax}(\hat{\mathbf Y}_{t-1})$ rowwise. Define $\zeta_{t,ic}=u_{t-1,ic}/\sum_{j\ne y_i}u_{t-1,ij}$. The weighted error and update are
\begin{align}
e_t&=\frac12\sum_{i=1}^Nw_{t-1,i}
\left[1-P^{(t-1)}_{i,y_i}+\sum_{c\ne y_i}\zeta_{t,ic}P^{(t-1)}_{ic}\right],
\label{eq:reweight-error}\\
\beta_t&=\frac{e_t}{1-e_t},\qquad
u_{t,ic}=u_{t-1,ic}\beta_t^{[1+P^{(t-1)}_{i,y_i}-P^{(t-1)}_{ic}]/2}.
\label{eq:pair-weight-update}
\end{align}
Equation~\eqref{eq:sample-weight-normalization} then gives the new sample weights. Keeping the pair weights across steps lets the reweighting track the competing classes for each training sample.

The weighted blocks and target operators are
\begin{align}
\mathbf A_t&=\mathbf D_t^{1/2}\mathbf F_t, &
\mathbf J_t&=\mathbf F_t^\top\mathbf D_t,\\
\overline{\mathbf A}_t&=[\mathbf A_1,\ldots,\mathbf A_t], &
\overline{\mathbf J}_t&=[\mathbf J_1^\top,\ldots,\mathbf J_t^\top]^\top.
\label{eq:accumulated-blocks}
\end{align}
Here $R_t=\sum_{s=1}^tq_s$, $\overline{\mathbf A}_t\in\mathbb R^{N\times R_t}$, and $\overline{\mathbf J}_t\in\mathbb R^{R_t\times N}$. The input width $d_t$ follows the subspace schedule, while $R_t$ grows as the model accumulates blocks. Earlier blocks preserve responses to broader subspaces, and later blocks add representations of the selected prefixes. For fixed historical blocks and weights, write
\begin{align}
\mathbf G_t&=\overline{\mathbf A}_t^\top\overline{\mathbf A}_t+\lambda\mathbf I_{R_t},\\
\mathbf W_t&=\mathbf G_t^{-1}\overline{\mathbf J}_t\mathbf Y.
\label{eq:analytical-solution}
\end{align}
The unique solution of this system minimizes
\begin{equation}
\min_{\mathbf V\in\mathbb R^{R_t\times C}}
\frac12\operatorname{tr}(\mathbf V^\top\mathbf G_t\mathbf V)
-\operatorname{tr}(\mathbf V^\top\overline{\mathbf J}_t\mathbf Y).
\label{eq:block-quadratic}
\end{equation}
It is strongly convex for $\lambda>0$, because $\mathbf G_t$ is positive definite. Each block receives the supervised term $\mathbf F_s^\top\mathbf D_s\mathbf Y$ under its own sample weights. The cross terms $\mathbf F_i^\top\mathbf D_i^{1/2}\mathbf D_j^{1/2}\mathbf F_j$ couple the coefficients of different blocks. The update therefore preserves earlier sample-weight assignments while fitting the new and existing blocks jointly. When all $\mathbf D_s=\mathbf D$, Eq.~\eqref{eq:block-quadratic} reduces, up to a constant and a factor of one half, to the weighted ridge objective
$\|\mathbf D^{1/2}([\mathbf F_1,\ldots,\mathbf F_t]\mathbf V-\mathbf Y)\|_F^2+\lambda\|\mathbf V\|_F^2$.

We solve the positive-definite system using Cholesky factorization~\cite{krishnamoorthy2013matrix}. Its factor $\mathbf L_t\mathbf L_t^\top=\mathbf G_t$ can be updated as
\begin{align}
\mathbf P_t&=\mathbf A_t^\top\overline{\mathbf A}_{t-1}\mathbf L_{t-1}^{-\top},\\
\mathbf Q_t\mathbf Q_t^\top&=\mathbf A_t^\top\mathbf A_t+\lambda\mathbf I_{q_t}
-\mathbf P_t\mathbf P_t^\top,\\
\mathbf L_t&=\begin{bmatrix}\mathbf L_{t-1}&\mathbf0\\\mathbf P_t&\mathbf Q_t\end{bmatrix}.
\label{eq:cholesky-block}
\end{align}
Here $\mathbf P_t\in\mathbb R^{q_t\times R_{t-1}}$, $\mathbf Q_t\in\mathbb R^{q_t\times q_t}$, and $\mathbf L_t\in\mathbb R^{R_t\times R_t}$. The Schur complement is positive definite. Triangular solves compute $\mathbf P_t$ and solve $\mathbf L_t\mathbf U_t=\overline{\mathbf J}_t\mathbf Y$, $\mathbf L_t^\top\mathbf W_t=\mathbf U_t$.

Prediction uses the unweighted features:
\begin{equation}
\hat{\mathbf Y}_t=[\mathbf F_1,\ldots,\mathbf F_t]\mathbf W_t.
\end{equation}
For fixed historical features and sample weights, the target-to-weight operator is
\begin{equation}\label{eq:fixed-block-target-map}
\mathbf K_t=\mathbf G_t^{-1}\overline{\mathbf J}_t,
\qquad
\mathbf W_t(\mathbf T\Omega)=\mathbf W_t(\mathbf T)\Omega.
\end{equation}
Here $\mathbf W_t(\mathbf T)=\mathbf K_t\mathbf T$, so the coding identity in Section~\ref{sec:theory} applies to the accumulated representation.

For class-score rows $\mathbf U\in\mathbb R^{N_s\times C}$ with labels $\mathbf Y$, define the cross-entropy loss
\begin{equation}\label{eq:cross-entropy-loss}
\mathcal L_{\mathrm{CE}}(\mathbf U,\mathbf Y)
=-\frac1{N_s}\sum_{i=1}^{N_s}
\log\bigl[\operatorname{softmax}(\mathbf U)_{i,y_i}\bigr].
\end{equation}
HGEE monitors the training loss $\ell_t=\mathcal L_{\mathrm{CE}}(\hat{\mathbf Y}_t,\mathbf Y)$. Its relative improvement is
\begin{equation}
g_t=\frac{\ell_{t-1}-\ell_t}{\ell_{t-1}},\qquad t\geq2.
\end{equation}
Evolution stops when $g_t<\tau$. The selected size $d_{\mathrm{HG}}$ and final scores $\hat{\mathbf Y}_{\mathrm{HG}}$ are passed to ECEO.

\subsection{Error-Correcting Ensemble Optimization}
\label{sec:eceo}
HGEE explores nested prefixes of one feature ordering. ECEO uses the selected size $d_{\mathrm{HG}}$ to set the number of coordinates in each primary subspace, then samples the coordinates separately for each base. This transfers the subspace scale reached by evolution while allowing the encoded learners to draw on different feature combinations. An auxiliary map retains information from the remaining coordinates. Each error-correcting base (ECB) pairs the resulting broad feature map with a codebook sampled from a pool of $P$ random dense codes~\cite{allwein2000reducing}.

For a feature $i$, define the global correlation sum and sampling priority by
\begin{align}
\mathrm{GUM}_i&=\sum_{j\ne i}\mathbf{FCM}[i,j],\\
\mathrm{GP}_i&=\frac{\exp\!\left(v_i/(\mathrm{GUM}_i+\epsilon)\right)}
{\sum_{j=1}^M\exp\!\left(v_j/(\mathrm{GUM}_j+\epsilon)\right)}.
\end{align}
A primary subset $\mathcal S_{\mathrm{pri}}$ of size $d_{\mathrm{HG}}$ is sampled without replacement using these priorities. The remaining coordinates form the auxiliary subset. Their feature maps have equal output width and are combined as
\begin{align}
\mathbf Z_{\mathrm{fus}}&=w_{\mathrm{pri}}\mathbf Z_{\mathrm{pri}}+w_{\mathrm{aux}}\mathbf Z_{\mathrm{aux}},\\
w_{\mathrm{pri}}&=\sum_{i\in\mathcal S_{\mathrm{pri}}}\mathrm{GP}_i,\qquad w_{\mathrm{aux}}=1-w_{\mathrm{pri}}.
\end{align}
Subspace sampling and random node generation produce the different feature maps used by the bases.

\subsubsection{Confidence-scaled targets}
HGEE also identifies samples with low true-class probabilities. To direct encoded learning toward these samples, let $\mathbf P^{\mathrm{HG}}=\operatorname{softmax}(\hat{\mathbf Y}_{\mathrm{HG}})$ rowwise and scale the targets elementwise:
\begin{equation}
T_{ic}=Y_{ic}/P^{\mathrm{HG}}_{ic},\qquad \ddot{\mathbf Y}=\mathbf T\Omega.
\label{eq:confidence-scaling}
\end{equation}
For one-hot labels, this multiplies sample $i$'s true-class codeword by $1/P^{\mathrm{HG}}_{i,y_i}$. The label determines the target codeword, and HGEE's confidence determines its magnitude. Samples that HGEE finds difficult therefore have larger targets for the encoded learners. Together with subspace sampling, this gives ECEO different feature views and directs their fitting toward samples that the evolving branch handles less confidently. The scaled target is fixed when fitting the encoded readouts.

Enhancement nodes $\mathbf H_{\mathrm{fus}}$ are generated from $\mathbf Z_{\mathrm{fus}}$. For base $b$, write $\mathbf F_b=[\mathbf Z_{\mathrm{fus}},\mathbf H_{\mathrm{fus}}]\in\mathbb R^{N\times q_b}$ and let $\mathbf K_b\in\mathbb R^{q_b\times N}$ be its fixed target-to-weight operator. Then
\begin{equation}\label{eq:ecb-target-map}
\mathbf W_b=\mathbf K_b(\mathbf T\Omega_b),\qquad
\mathbf S_b=\mathbf F_b\mathbf W_b.
\end{equation}
Each base uses uniform training sample weights $\mathbf I/N$, giving
\begin{equation}\label{eq:ecb-ridge-operator}
\begin{split}
\mathbf G_b&=\mathbf F_b^\top\mathbf F_b/N+\lambda\mathbf I_{q_b},\\
\mathbf K_b&=\mathbf G_b^{-1}\mathbf F_b^\top/N.
\end{split}
\end{equation}
Cholesky factorization and triangular solves evaluate the readout. Each base is therefore a common-weight ridge fit of the form in Eq.~\eqref{eq:theory-ridge-map}.

\subsubsection{Codebook pruning}
The usefulness of a code column depends on its contribution to competing class scores through the output learned for it, as shown in Eq.~\eqref{eq:ensemble-class-scores}. PCP therefore selects columns using the validation predictions of the fitted bases. It is first applied to the initial ECB. After adding each new base, we concatenate its outputs and codebook with the retained columns. PCP then prunes the combined ensemble. At a given stage, let $\mathbf S$ contain the concatenated outputs and let $\Omega\in\{-1,1\}^{C\times L_{\mathrm{tot}}}$ be the corresponding codebook. Evaluating the fitted bases on $N_v$ validation samples gives $\mathbf S_{\mathrm{val}}\in\mathbb R^{N_v\times L_{\mathrm{tot}}}$. The gain from deleting column $j$ is
\begin{equation}
\mathrm{RG}_j=\mathrm{acc}_{\mathrm{val}}(\mathbf S_{\mathrm{val},-j},\Omega_{-j})
-\mathrm{acc}_{\mathrm{val}}(\mathbf S_{\mathrm{val}},\Omega).
\label{eq:pruning-gain}
\end{equation}
The column with the largest positive gain is removed, and the process repeats until no deletion improves validation accuracy. Both the output matrix and codebook lose the selected column. Pruning changes the accumulated class-score contributions in Eq.~\eqref{eq:ensemble-class-scores}. Column selection therefore accounts for the other bases already present.

To examine the outputs' agreement with their target codewords, we use Latent-Target Space Compatibility (LTSC). For outputs $\mathbf S$ on $N_s$ labeled samples and $L_{\mathrm{tot}}$ retained columns, it is
\begin{equation}
\mathrm{LTSC}=\frac1{N_sL_{\mathrm{tot}}}\sum_{i=1}^{N_s}\mathbf S[i]\Omega[y_i]^\top.
\label{eq:ltsc}
\end{equation}
LTSC averages true-codeword scores and therefore depends on both directional agreement and output magnitude.

After $B$ bases have been fitted and their output columns pruned, class scores are
\begin{equation}
\hat{\mathbf Y}_{\mathrm{EC}}=\mathbf S^{(B)}(\Omega^{(B)})^\top.
\end{equation}
Bipolar codewords have equal norm after common column deletion, so inner-product and nearest-Euclidean decoding agree. Starting with the second base, ECEO uses the same relative-loss stopping criterion as HGEE. It evaluates this criterion using the validation cross-entropy of the decoded class scores after pruning.

\subsection{Broad Adaptive Decision Fusion}
\label{sec:badf}
The branches produce scores for the same classes from different representations and training targets. BADF learns their combination by concatenating the scores and fitting a second-level BLS. Let $\ell_{\mathrm{HG}}$ and $\ell_{\mathrm{EC}}$ be the branches' validation cross-entropy losses. The fusion weight is $\omega=\ell_{\mathrm{EC}}/(\ell_{\mathrm{HG}}+\ell_{\mathrm{EC}})$, giving
\begin{equation}
\mathbf X_{\mathrm{BA}}=[\omega\hat{\mathbf Y}_{\mathrm{HG}},(1-\omega)\hat{\mathbf Y}_{\mathrm{EC}}]
\in\mathbb R^{N\times2C}.
\end{equation}
BADF trains an additional BLS on $(\mathbf X_{\mathrm{BA}},\mathbf Y)$ with uniform sample weights $\mathbf I/N$ to produce the final class scores.

\subsection{Computational Complexity}
\label{sec:discussion}
\label{sec:complexity}
We analyze the training, inference, and storage costs of HiER-BLS using the dense Cholesky systems in Section~\ref{sec:hgee}. Let $T_{\mathrm{HG}}$ and $B$ be the numbers of HGEE steps and fitted ECBs. The total sample count $\bar N=N+N_v$ includes training and validation samples. For a design with $q$ columns and $o$ outputs, fitting the readout and evaluating its outputs costs
\begin{equation}
\mathcal C_{\mathrm{fit}}(q,o)
=O(Nq^2+q^3+\bar Nqo+q^2o).
\end{equation}
The terms account for the training Gram matrix, factorization, target products and predictions, and triangular solves, respectively. Generating $f$ feature nodes from $d_{\mathrm{in}}$ inputs and $h$ enhancement nodes costs
\begin{equation}
\mathcal C_{\mathrm{map}}(d_{\mathrm{in}},f,h)
=O\!\left(\bar N(d_{\mathrm{in}}f+fh)\right).
\end{equation}
For subspace alignment, a direct evaluation of the data-fit gradient costs $O(Nr_{t-1}d_t)$, giving $O(I_tNr_{t-1}d_t)$ for $I_t$ first-order iterations.

Adding $q$ columns to $K$ accumulated columns updates the Cholesky factor at cost
\begin{equation}
\mathcal C_{\mathrm{up}}(K,q)
=O(NKq+Nq^2+K^2q+Kq^2+q^3).
\end{equation}
These terms correspond to the cross Gram matrix, new Gram block, triangular solve, Schur-complement product, and new factorization. The HGEE step also forms the new target product, solves for all output weights, and evaluates the accumulated model. Its total cost is therefore
\begin{equation}
\begin{split}
\mathcal C_{\mathrm{HG},t}
={}&\mathcal C_{\mathrm{up}}(R_{t-1},q_t)
+\mathcal C_{\mathrm{map}}(d_t,r_t,s_t^{\mathrm h})\\
&+O\!\left(Nq_tC+R_t^2C+\bar NR_tC
+I_tNr_{t-1}d_t\right),
\end{split}
\end{equation}
with $R_0=0$ and no alignment term at $t=1$. Sample reweighting and cross-entropy evaluation add $O(NC)$ work per step, absorbed by the displayed cost. Summing over $t=1,\ldots,T_{\mathrm{HG}}$ gives the HGEE training cost. Reusing the old factor avoids refactorizing its $R_{t-1}\times R_{t-1}$ block. Reducing $d_t$ lowers the input-mapping and alignment costs of a new block. The joint readout cost still depends on the growing accumulated width $R_t$.

\begin{table}[t]
\centering
\caption{Costs outside the HGEE updates. Here $f_b,h_b$ and $f_{\mathrm{BA}},h_{\mathrm{BA}}$ are the feature and enhancement widths of an ECB and BADF, respectively.}
\label{tab:complexity}
\small
\setlength{\tabcolsep}{3pt}
\renewcommand{\arraystretch}{1.12}
\begin{tabular}{@{}>{\raggedright\arraybackslash}p{0.31\columnwidth}>{\raggedright\arraybackslash}p{0.63\columnwidth}@{}}
\toprule
Operation & Leading cost \\
\midrule
Spearman correlations and cached FSR & $O(MN\log N+NM^2+M^2)$ \\
ECB $b$, $q_b=f_b+h_b$ & $\mathcal C_{\mathrm{map}}(M,f_b,h_b)$\newline $+\mathcal C_{\mathrm{fit}}(q_b,L_{b,0})$ \\
PCP over all added columns & $O(N_vCL_0^3)$ directly;\newline $O(N_vCL_0^2)$ with cached scores \\
BADF, $q_{\mathrm{BA}}=f_{\mathrm{BA}}+h_{\mathrm{BA}}$ & $\mathcal C_{\mathrm{map}}(2C,f_{\mathrm{BA}},h_{\mathrm{BA}})$\newline $+\mathcal C_{\mathrm{fit}}(q_{\mathrm{BA}},C)$ \\
\bottomrule
\end{tabular}
\end{table}

Table~\ref{tab:complexity} gives the remaining components. Random-forest fitting adds a cost determined by the tree count, depth, and split search. FSR uses $O(M^2)$ work when cumulative correlations are updated after each selected feature. Recomputing those sums gives $O(M^3)$. The primary and auxiliary inputs of an ECB together have $M$ coordinates, so both mappings enter its node-generation cost. The ECB costs are summed over the $B$ fitted bases. Codebook construction adds a cost determined by the random-dense code-design procedure~\cite{allwein2000reducing}.

Let $L_{b,0}$ be the number of columns introduced by base $b$, and $L_0=\sum_{b=1}^{B}L_{b,0}$. PCP operates on fixed validation outputs. With $k$ current columns, directly evaluating all $k$ deletion candidates costs $O(N_vCk^2)$. If the current class-score matrix is cached, deleting column $j$ subtracts the rank-one contribution $\mathbf S_{\mathrm{val}}[:,j]\Omega[:,j]^\top$. A candidate scan then costs $O(N_vCk)$. Each successful scan removes one column. Ensemble construction requires at most $L_0$ deletions and $B$ terminating scans. Since $B\leq L_0$, the full-process bounds are cubic and quadratic in $L_0$, respectively. Computing the ensemble's concatenated class scores after each added base also costs at most $O(\bar NCBL_0)$.

HGEE stores $O(\bar NR+R^2+RC)$ entries for its accumulated design, factor, and readout, where $R=R_{T_{\mathrm{HG}}}$. Inputs and feature correlations add $O(\bar NM+M^2)$. Fitting an ECB requires $O(\bar Nq_b+q_b^2+q_bL_{b,0})$ workspace. Retaining outputs and codebooks during pruning uses at most $O((\bar N+C)L_0)$ storage. The reweighting pair states, class scores, and BADF input require $O(\bar NC)$ entries. BADF fitting uses $O(\bar Nq_{\mathrm{BA}}+q_{\mathrm{BA}}^2+q_{\mathrm{BA}}C)$ workspace. These training requirements also include $O(PCL)$ entries for the codebook pool and the storage for the random forest.

Let $L_b$ be the final retained length of base $b$, with $L_{\mathrm{tot}}=\sum_bL_b$. The number of feature-mapping coefficients, up to bias terms, is
\begin{equation}
\begin{split}
Q_{\mathrm{map}}={}&\sum_{t=1}^{T_{\mathrm{HG}}}
(d_tr_t+r_ts_t^{\mathrm h})
+\sum_{b:L_b>0}(Mf_b+f_bh_b)\\
&+2Cf_{\mathrm{BA}}+f_{\mathrm{BA}}h_{\mathrm{BA}}.
\end{split}
\end{equation}
Evaluating these maps for one test sample and applying the trained readouts and decoder costs
\begin{equation}
O\!\left(Q_{\mathrm{map}}+RC+
\sum_bq_bL_b+CL_{\mathrm{tot}}+q_{\mathrm{BA}}C\right).
\end{equation}
The stored feature weights, output weights, and codebooks have the same order of size. Counting all fitted bases and their initial columns $L_{b,0}$ gives an upper bound on parameter storage during training. The workspaces above are additional. Pruning reduces both encoded readout evaluation and codeword scoring.

\section{Experiments}
\label{sec:experiments}
We compare HiER-BLS with existing BLS variants on five image and five tabular datasets. We then examine how guided subspace evolution affects incremental learning and how encoded readouts respond to weight errors. Ablation studies assess the two branches and their components.

\subsection{Experimental Setup}\label{sec:experimental-setup}
\subsubsection{Evaluation protocol}
The image datasets are MNIST~\cite{lecun1998gradient}, Fashion-MNIST~\cite{xiao2017fashion}, NORB~\cite{lecun2004learning}, CIFAR10, and CIFAR100~\cite{krizhevsky2009learning}. The tabular datasets are Shuttle~\cite{shuttle_148}, Letter~\cite{letter_recognition_59}, Pendigits~\cite{pen-based_recognition_of_handwritten_digits_81}, LED~\cite{led_display_domain_57}, and Waveform~\cite{waveform_database_generator}. The CIFAR experiments use features from a pretrained ResNet~\cite{he2016deep}.

We use one tenth of the training set for validation and hyperparameter selection. We average the HiER-BLS results over five runs with different random seeds. Dataset-independent settings are $\tau_{\mathrm{high}}=0.4$, $\tau_{\mathrm{low}}=0.1$, $\tau=0.01$, and $P=100$. The HiER-BLS experiments use Ubuntu 20.04, an Intel Xeon Gold 6226R CPU, and Python 3.8.10.

Table~\ref{tab:dataset_details} summarizes the dataset and model settings.\label{sec:reproducibility}

\newlength{\colsepofhypertab}
\setlength{\colsepofhypertab}{5.1pt}
\newlength{\rowsepofhypertab}
\setlength{\rowsepofhypertab}{4pt}

\begin{table*}[htbp]
\renewcommand{\arraystretch}{\arraystretchparam}
\caption{Dataset sizes and model settings. Feature and Enh. specify the feature- and enhancement-window configurations, respectively, as (number of windows, nodes per window).}
\label{tab:dataset_details}
\centering
        \begin{tblr}{
            colspec={*{14}{Q[c]}},
            colsep=\colsepofhypertab,
            rowsep=\rowsepofhypertab,
            hlines,
            vlines,
        }
        \SetCell[r=2]{c} Dataset & \SetCell[r=2]{c} Type & \SetCell[r=2]{c} Input & \SetCell[r=2]{c} Class & \SetCell[r=2]{c} Train & \SetCell[r=2]{c} Test & \SetCell[c=2]{c} HGEE & & \SetCell[c=3]{c} ECEO & & & \SetCell[c=2]{c} BADF & & \SetCell[r=2]{c} $\lambda$ \\
        & & & & & & Feature & Enh. & Feature & Enh. & $L$ & Feature & Enh. & \\
        MNIST & Image & (28,28,1) & 10 & 60000 & 10000 & (30,40) & (1,11000) & (10,10) & (1,11000) &20 & (20,10) & (1,30) & $10^{-6}$ \\
        Fashion-MNIST & Image & (28,28,1) & 10 & 60000 & 10000 & (100,10) & (1,9000) & (100,10) & (1,11000) &20 & (30,40) & (1,770) & $10^{-6}$ \\
        NORB & Image & (32,32,2) & 5 & 24300 & 24300 & (10,100) & (1,8000) & (30,50) & (1,8000) &10 & (50,35) & (1,10) & $10^{-3}$ \\
        CIFAR10 & Image & (32,32,3) & 10 & 50000 & 10000 & (128,20) & (1,5000) & (128,20) & (1,10000) & 20 & (20,25) & (1,22) & $10^{-7}$ \\
        CIFAR100 & Image & (32,32,3) & 100 & 50000 & 10000 & (128,20) & (1,5000) & (128,20) & (1,10000) & 200 & (10,10) & (1,10) & $10^{-6}$ \\
        Shuttle & Tabular & 10 & 7 & 43500 & 14500 & (10,10) & (1,1500) & (10,10) & (1,3000) & 14 & (10,10) & (1,25) & $10^{-6}$ \\
        Letter & Tabular & 16 & 26 & 16000 & 4000 & (10,10) & (1,1500) & (10,10) & (1,3000) & 52 & (10,10) & (1,19) & $10^{-6}$ \\
        Pendigits & Tabular & 16 & 10 & 8000 & 2992 & (10,10) & (1,300) & (10,10) & (1,500) & 20 & (10,10) & (1,20) & $10^{-6}$ \\
        LED & Tabular & 24 & 10 & 160000 & 40000 & (10,10) & (1,1200) & (10,10) & (1,3000) & 20 & (10,10) & (1,47) & $10^{-6}$ \\
        Waveform & Tabular & 40 & 3 & 4200 & 800 & (10,10) & (1,200) & (10,10) & (1,500) & 6 & (10,10) & (1,10) & $10^{-6}$ \\
        \end{tblr}
\end{table*}

\subsubsection{Comparative baselines}
The image comparisons include standard BLS and its cascaded variants: CFEBLS~\cite{chen2018universal}, CFBLS-pyramid, CEBLS-dropout, CFBLS-dense, and CEBLS-dense~\cite{zhang2020analysis}. We also compare Stacked BLS~\cite{liu2020stacked} and the convolutional models CNNBLS~\cite{yang2018cnn} and ConvBLS~\cite{lei2024convbls}. Ensemble baselines include Bagging-BLS, Stacking-BLS~\cite{yicheng2021broad}, AdaBoost-BLS, and AdaBoost-Incremental BLS (AdaBoost-IBLS). We also compare AdaBoost Stacking-Incremental BLS (AS-IBLS)~\cite{yun2024adaboost} with logistic-regression (LR), decision-tree (DT), or random-forest (RF) meta-classifiers. The tabular comparisons include Incremental BLS~\cite{chen2017broad}, TiBLS~\cite{fu2022task}, BLS-CIL~\cite{du2023class}, RI-BLS~\cite{zhong2024robust}, and InvF-BLS~\cite{chen2025efficient}.

\subsection{Comparison with BLS Baselines}\label{sec:comparison}
\subsubsection{Image datasets}
Table~\ref{tab:image-pattern-recognition} shows that HiER-BLS achieves the highest mean accuracy on four of five datasets. Its gains over the strongest competing methods are 0.05, 1.29, 1.47, and 1.10 percentage points on MNIST, Fashion-MNIST, CIFAR10, and CIFAR100, respectively. The improvement is small on MNIST, where several methods already exceed 99\%, and larger on Fashion-MNIST and the two CIFAR datasets. On NORB, HiER-BLS ranks second to ConvBLS, with a difference of 0.56 percentage points.

The comparisons also show that the benefit of architectural expansion depends on how it is organized. The cascaded variants give mixed results relative to BLS; all five are less accurate on Fashion-MNIST. Among ensemble methods, AdaBoost-IBLS improves on AdaBoost-BLS across all five datasets, supporting incremental updates within a boosting framework. HiER-BLS further improves the best results in this group. The evolution and component studies below examine how choosing the inputs to new blocks and guiding encoded learning contribute to its performance. HiER-BLS also has the smallest standard deviation on Fashion-MNIST and shares the smallest values on CIFAR10 and CIFAR100.

\begin{table*}[htbp]
\renewcommand{\arraystretch}{\arraystretchparam}
\centering
\caption{Image-classification accuracy (\%). Avg and Std denote the reported mean and standard deviation. Bold marks the largest mean or smallest standard deviation in each column.}
\label{tab:image-pattern-recognition}
\begin{tabular}{ @{}  *{12}{c} @{}}
\toprule
\toprule
\multirow{2}{*}{\textbf{Category}} & \multirow{2}{*}{\diagbox{\textbf{Method}}{\textbf{Dataset}}} & \multicolumn{2}{c}{\textbf{MNIST}} & \multicolumn{2}{c}{\textbf{Fashion-MNIST}} & \multicolumn{2}{c}{\textbf{NORB}} & \multicolumn{2}{c}{\textbf{CIFAR10}} & \multicolumn{2}{c}{\textbf{CIFAR100}} \\
\cmidrule(lr){3-4} \cmidrule(lr){5-6} \cmidrule(lr){7-8} \cmidrule(lr){9-10} \cmidrule(lr){11-12}
& & Avg (\%) & Std (\%) & Avg (\%) & Std (\%) & Avg (\%) & Std (\%) & Avg (\%) & Std (\%) & Avg (\%) & Std (\%) \\
\midrule
\midrule
Backbone & BLS & 98.74 & 0.07 & 91.39 & 0.09 & 89.27 & 0.11 & 93.14 & 0.08 & 75.64 & 0.14 \\
\midrule
\multirow{5}{*}{Cascaded} & CFEBLS & 98.83 & 0.06 & 87.13 & 0.18 & 90.02 & 0.12 & 93.15 & 0.09 & 75.36 & 0.16 \\
& CFBLS-pyramid & 98.65 & 0.10 & 89.88 & 0.21 & 87.55 & 0.24 & 93.10 & 0.11 & 75.74 & 0.18 \\
& CEBLS-dropout & 98.59 & 0.14 & 89.71 & 0.26 & 88.35 & 0.27 & 93.11 & 0.15 & 75.73 & 0.23 \\
& CFBLS-dense & 98.59 & 0.09 & 89.74 & 0.19 & 87.45 & 0.22 & 93.10 & 0.10 & 75.19 & 0.17 \\
& CEBLS-dense & 98.40 & 0.11 & 89.45 & 0.22 & 88.40 & 0.23 & 93.16 & 0.12 & 75.70 & 0.19 \\
\midrule
Stacked & Stacked BLS & 99.12 & 0.05 & 91.53 & 0.08 & 91.90 & 0.10 & 92.73 & 0.10 & 76.33 & \textbf{0.13} \\
\midrule
\multirow{2}{*}{Convolutional} & CNNBLS & 96.94 & 0.18 & 84.21 & 0.31 & 90.07 & 0.20 & 92.00 & 0.22 & 75.55 & 0.29 \\
& ConvBLS & 99.28 & 0.04 & 92.43 & 0.13 & \textbf{97.19} & 0.42 & 94.50 & 0.09 & 78.34 & 0.15 \\
\midrule
\multirow{7}{*}{Ensemble} & Bagging-BLS & 99.01 & \textbf{0.02} & 90.47 & 0.06 & 90.95 & 0.10 & 93.23 & \textbf{0.07} & 76.15 & 0.21 \\
& Stacking-BLS & 98.96 & 0.04 & 90.14 & 0.12 & 90.50 & \textbf{0.09} & 93.19 & 0.08 & 76.24 & 0.16 \\
& AdaBoost-BLS & 98.76 & 0.11 & 90.83 & 0.09 & 90.06 & 0.10 & 93.18 & 0.19 & 76.54 & 0.24 \\
& AdaBoost-IBLS & 98.82 & 0.08 & 91.14 & 0.06 & 90.78 & 0.11 & 93.27 & 0.16 & 76.66 & 0.19 \\
& AS-IBLS (LR) & 98.90 & 0.12 & 92.06 & 0.07 & 92.34 & 0.20 & 93.43 & 0.13 & 76.20 & 0.17 \\
& AS-IBLS (DT) & 98.81 & 0.13 & 91.76 & 0.09 & 91.57 & \textbf{0.09} & 93.22 & 0.17 & 76.83 & 0.23 \\
& AS-IBLS (RF) & 98.87 & 0.07 & 92.28 & 0.08 & 93.04 & 0.21 & 93.26 & \textbf{0.07} & 76.84 & 0.15 \\
\midrule
HiER & \textbf{HiER-BLS (ours)} & \textbf{99.33} & 0.04 & \textbf{93.72} & \textbf{0.05} & 96.63 & 0.29 & \textbf{95.97} & \textbf{0.07} & \textbf{79.44} & \textbf{0.13} \\
\bottomrule
\bottomrule
\end{tabular}
\end{table*}

\subsubsection{Tabular datasets}
Table~\ref{tab:tabular_datasets} extends the comparison to structured data. HiER-BLS achieves the highest accuracy on all five tabular datasets. The largest gains over the strongest competing results are 5.07 and 4.54 percentage points on LED and Waveform, respectively. The differences are smaller on Shuttle and Pendigits, where the leading methods already approach 99\%.

InvF-BLS has the smallest training times in the table. The baseline values in the first four rows also appear in Zhong et al.~\cite{zhong2024robust}. The InvF-BLS values appear in Chen et al.~\cite{chen2025efficient}. The two sources use different computing environments.

\begin{table*}[htbp]
\renewcommand{\arraystretch}{\arraystretchparam}
\centering
\caption{Testing accuracy and training time on tabular datasets.}
\label{tab:tabular_datasets}
\begin{tabular}{*{11}{c}}
\toprule
\multirow{2}{*}{\diagbox{\textbf{Method}}{\textbf{Dataset}}} & \multicolumn{2}{c}{\textbf{Shuttle}} & \multicolumn{2}{c}{\textbf{Letter}} & \multicolumn{2}{c}{\textbf{Pendigits}} & \multicolumn{2}{c}{\textbf{LED}} & \multicolumn{2}{c}{\textbf{Waveform}} \\
\cmidrule(lr){2-3} \cmidrule(lr){4-5} \cmidrule(lr){6-7} \cmidrule(lr){8-9} \cmidrule(lr){10-11}
& Acc (\%) & Time (s) & Acc (\%) & Time (s) & Acc (\%) & Time (s) & Acc (\%) & Time (s) & Acc (\%) & Time (s) \\
\midrule
Incremental BLS & 98.72 & 24.47 & 93.17 & 7.23 & 98.09 & 0.90 & 74.28 & 836.73 & 84.00 & 0.30 \\
TiBLS & 98.74 & 48.96 & 91.50 & 10.97 & 97.66 & 0.90 & 74.27 & 80.36 & 84.00 & 0.66 \\
BLS-CIL & 86.23 & 195.49 & 89.08 & 80.00 & 96.32 & 2.13 & 74.13 & 60.87 & 81.63 & 0.84 \\
RI-BLS & 98.78 & 5.94 & 94.00 & 3.43 & 98.26 & 0.20 & 74.28 & 19.19 & 84.38 & 0.12 \\
InvF-BLS & 99.03 & \textbf{0.64} & 95.30 & \textbf{0.26} & 98.90 & \textbf{0.02} & 74.08 & \textbf{4.76} & 82.80 & \textbf{0.01} \\
\textbf{HiER-BLS (ours)} & \textbf{99.76} & 12.36 & \textbf{98.20} & 5.21 & \textbf{99.41} & 0.56 & \textbf{79.35} & 128.47 & \textbf{88.92} & 0.19 \\
\bottomrule
\end{tabular}
\end{table*}

\newcommand{\halfnum}{0.495\linewidth}
\subsection{Subspace Evolution}
\label{sec:feature-hierarchy-tailored-evolution}
Fig.~\ref{fig:hierarchy-evolution-pairs} examines several reduction rates within each HTI-selected schedule family. It uses exponential evolution for MNIST, linear evolution for CIFAR100, and cosine evolution for Fashion-MNIST. The left panels show importance-value distributions. The right panels track accuracy as HGEE adds successive blocks.

FSR places important, less redundant coordinates earlier in the ordering. The evolution schedule controls how quickly later coordinates are excluded from new blocks. The full-space settings are $\gamma=1$, $\alpha=0$, and the $T\to\infty$ cosine limit. They keep the inputs of every block unchanged and omit subspace alignment. Their accuracy peaks early and then declines, whereas the shrinking-subspace settings retain more of the gains from subsequent blocks. Since HGEE keeps its earlier blocks, this result concerns how new capacity is directed: the readout retains broad input coverage while later blocks focus on selected coordinates.

The trajectories also show why the truncation rate matters. On MNIST and CIFAR100, the more aggressive settings are competitive in the early steps, but the gentler $\gamma=0.5$ and $\alpha=0.16$ settings give higher final accuracy. Retaining more coordinates leaves later blocks access to information that faster truncation has already removed from their inputs. The finite-$T$ cosine settings on Fashion-MNIST reach similar final accuracies, with no persistent advantage for the most aggressive setting. These observations favor balancing early concentration with the information available for later refinement. The FSR and HTI ablations in Table~\ref{tab:ablation} reinforce the two parts of this choice: which coordinates remain available and how quickly the new blocks' inputs narrow.

\begin{figure*}[htbp]
    \centering
    \begin{subfigure}[b]{\halfnum}
        \centering
        \includegraphics[width=\linewidth]{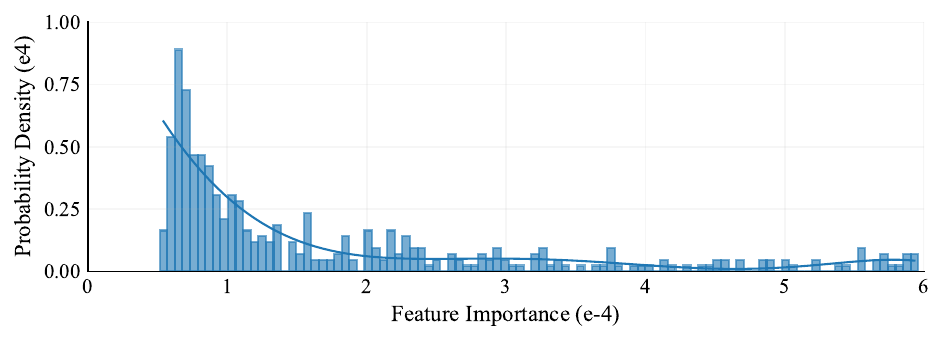}
        \caption{MNIST importance distribution (dense hierarchy)}
        \label{fig:mnist-intensive-hierarchy}
    \end{subfigure}
    \hfill
    \begin{subfigure}[b]{\halfnum}
        \centering
        \includegraphics[width=\linewidth]{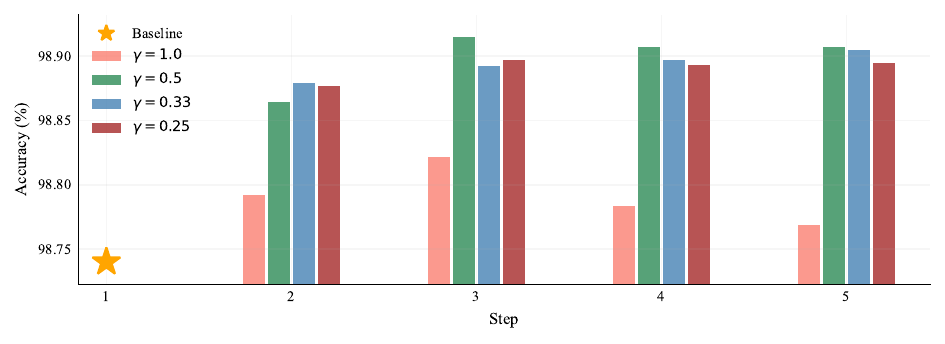}
        \caption{Exponential evolution for MNIST}
        \label{fig:mnist-exponential-evolution}
    \end{subfigure}
    \vskip\baselineskip
    \begin{subfigure}[b]{\halfnum}
        \centering
        \includegraphics[width=\linewidth]{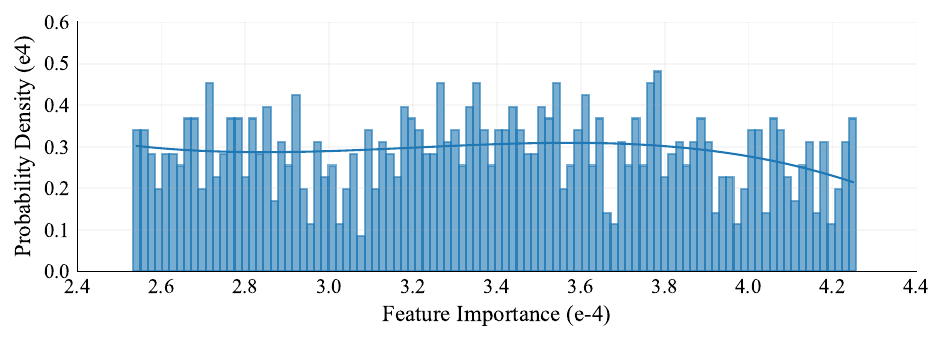}
        \caption{CIFAR100 importance distribution (uniform hierarchy)}
        \label{fig:cifar100-uniform-hierarchy}
    \end{subfigure}
    \hfill
    \begin{subfigure}[b]{\halfnum}
        \centering
        \includegraphics[width=\linewidth]{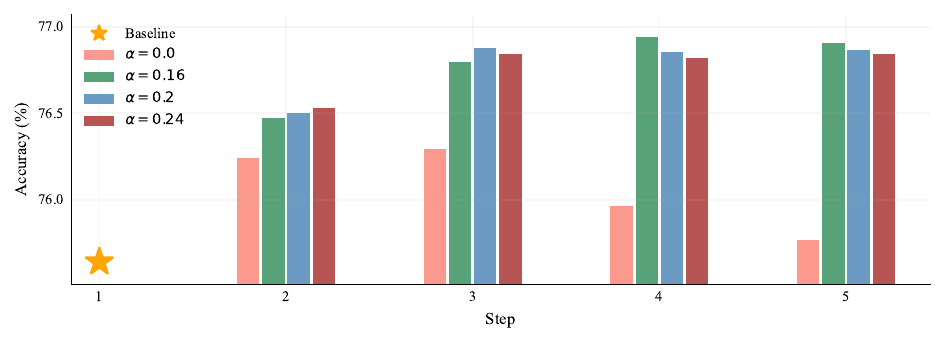}
        \caption{Linear evolution for CIFAR100}
        \label{fig:cifar100-linear-evolution}
    \end{subfigure}
    \vskip\baselineskip
    \begin{subfigure}[b]{\halfnum}
        \centering
        \includegraphics[width=\linewidth]{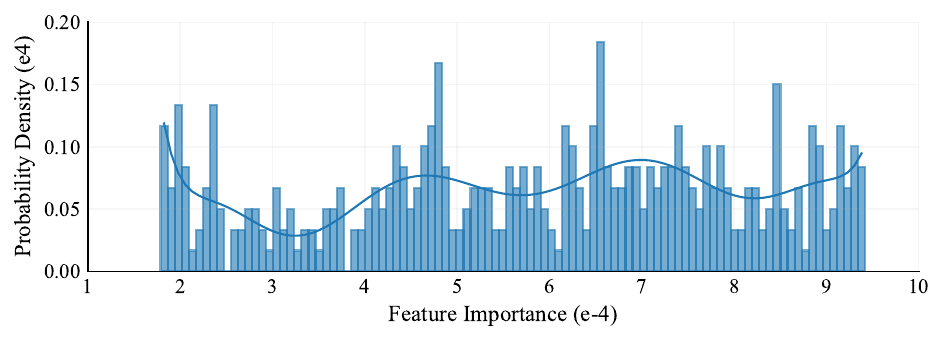}
        \caption{Fashion-MNIST importance distribution (mixed hierarchy)}
        \label{fig:fashionmnist-periodic-hierarchy}
    \end{subfigure}
    \hfill
    \begin{subfigure}[b]{\halfnum}
        \centering
        \includegraphics[width=\linewidth]{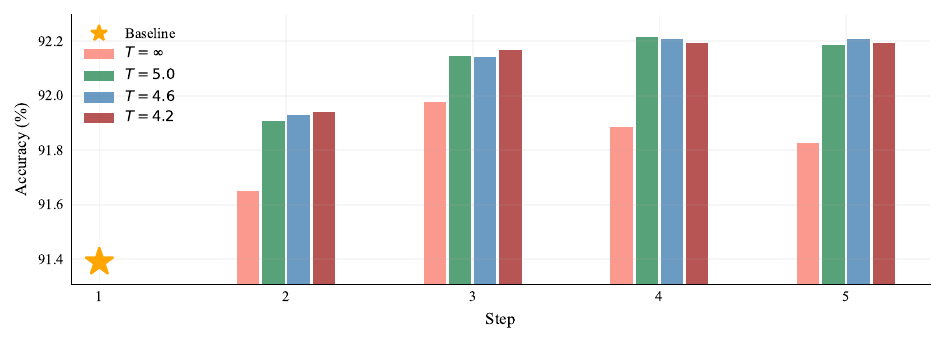}
        \caption{Cosine evolution for Fashion-MNIST}
        \label{fig:fashionmnist-cosine-evolution}
    \end{subfigure}
    \caption{Feature-importance distributions (left) and HGEE accuracy versus evolution step for selected schedule parameters (right).}
    \label{fig:hierarchy-evolution-pairs}
\end{figure*}

\subsection{Gaussian Weight-Perturbation Tests}
We perturb the trained ECEO weights with additive zero-mean Gaussian errors and vary the ECOC code length in Fig.~\ref{fig:weight-error}. This tests how the encoded branch retains classification accuracy as weight errors disturb its outputs. The curves use three fixed noise standard deviations, with horizontal one-hot baselines. The nonzero values are $\sigma=0.015,0.03$ for MNIST, $0.01,0.02$ for Fashion-MNIST and CIFAR10, and $0.001,0.002$ for CIFAR100.

Increasing code length gives the clearest gains at the stronger noise levels. Clean accuracy approaches a plateau relatively early, while the most strongly perturbed models continue to improve across a wider range of lengths. Thus, the useful range of code lengths depends on the perturbation level: redundancy can still improve performance under weight errors after its benefit to clean accuracy has become small.

At the longer tested lengths, ECEO exceeds the corresponding one-hot baseline on all four datasets. The shortest MNIST code under $\sigma=0.03$ initially falls below that baseline and overtakes it as the code grows. In the geometry of Section~\ref{sec:decoding-proof}, a prediction remains stable while its perturbed output stays within the region assigned to the clean decoded class. The observed recovery with code length is consistent with using redundant output coordinates to preserve class decisions under weight errors.

\begin{figure*}[htbp]
    \centering
    \begin{subfigure}[b]{\halfnum}
        \centering
        \includegraphics[width=\linewidth]{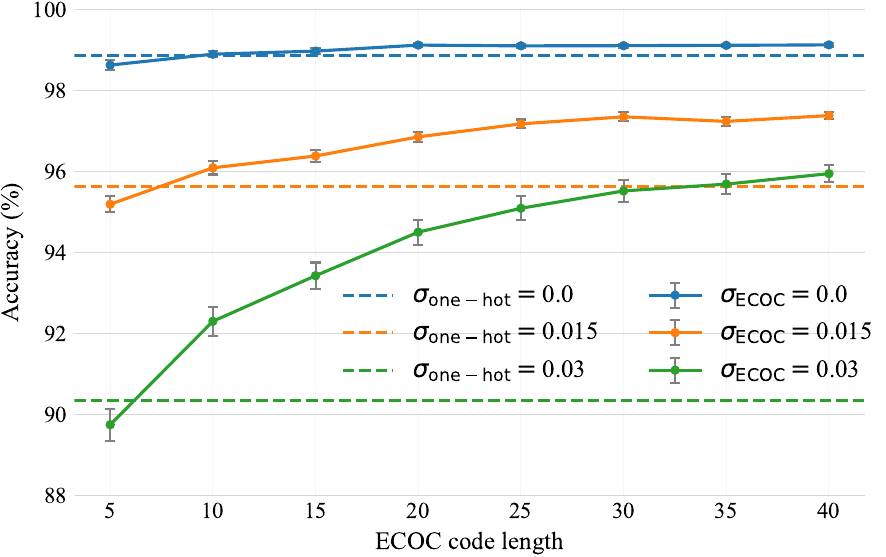}
        \caption{MNIST}
        \label{fig:mnist-error}
    \end{subfigure}
    \hfill
    \begin{subfigure}[b]{\halfnum}
        \centering
        \includegraphics[width=\linewidth]{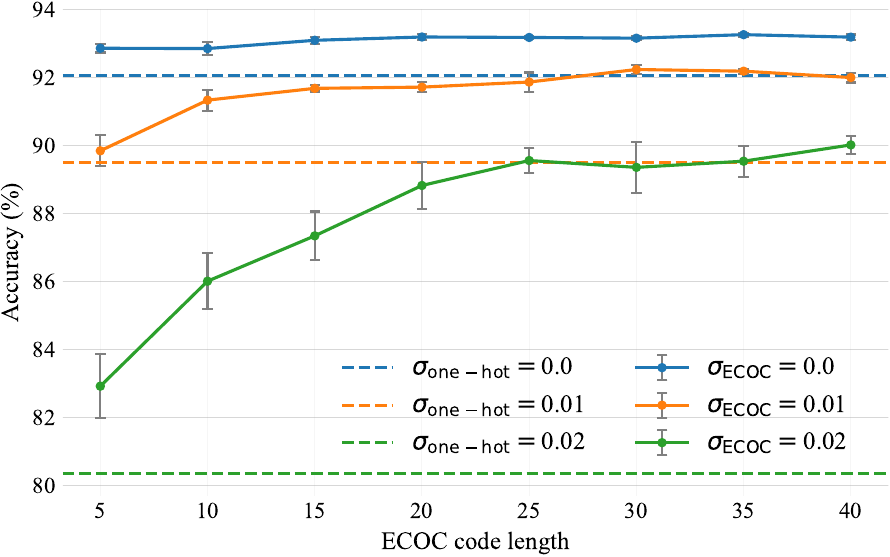}
        \caption{Fashion-MNIST}
        \label{fig:fashionmnist-error}
    \end{subfigure}
    \vskip\baselineskip
    \begin{subfigure}[b]{\halfnum}
        \centering
        \includegraphics[width=\linewidth]{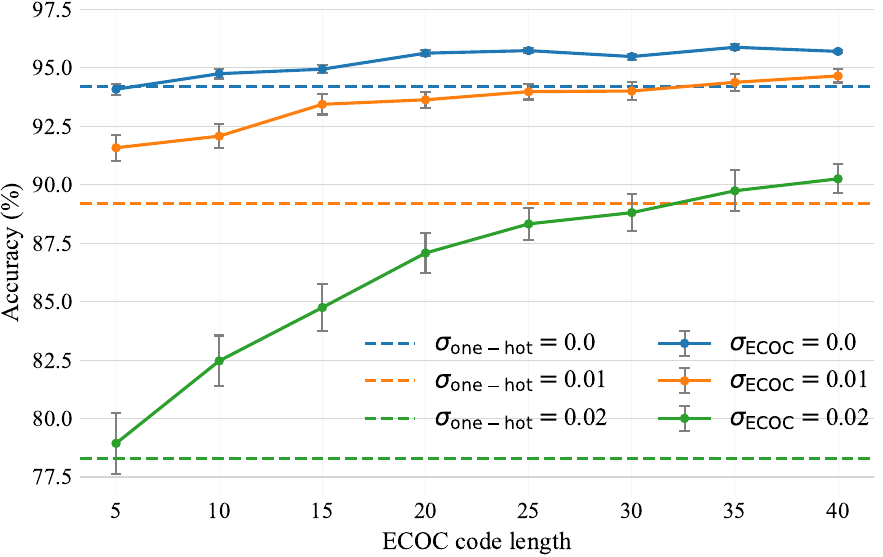}
        \caption{CIFAR10}
        \label{fig:cifar10-error}
    \end{subfigure}
    \hfill
    \begin{subfigure}[b]{\halfnum}
        \centering
        \includegraphics[width=\linewidth]{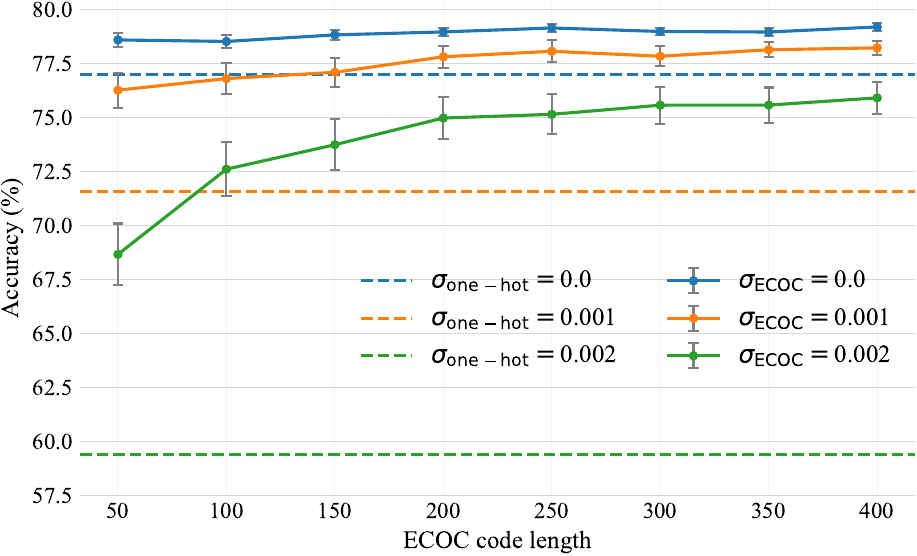}
        \caption{CIFAR100}
        \label{fig:cifar100-error}
    \end{subfigure}
    \caption{Classification accuracy versus ECOC code length under zero-mean Gaussian weight errors. Each panel includes three fixed values of $\sigma$; dashed horizontal lines denote the corresponding one-hot baselines.}
    \label{fig:weight-error}
\end{figure*}

\subsection{Ablation Study}
\subsubsection{Component comparisons}
Table~\ref{tab:ablation} examines the contributions of the branches during learning and fusion. ECEO is the stronger individual branch on all five datasets, yet BADF improves its accuracy by a further 0.18--0.53 percentage points. This gain is consistent with the two branches supplying complementary prediction information. On CIFAR100, the combined model reaches 79.44\%, exceeding HGEE and ECEO by 2.51 and 0.48 percentage points, respectively.

Within HGEE, removing FSR gives the largest accuracy decrease on every dataset. This is consistent with the prefix construction of successive subspaces: the ordering determines which coordinates remain available to later blocks. Importance and correlation guide the content of the evolving representation, while HTI selects the schedule of subspace reduction. The lower accuracies without either component complement the evolution trajectories in Fig.~\ref{fig:hierarchy-evolution-pairs}.

HGEE also improves ECEO during training. On NORB, HGEE alone reaches 91.89\%, yet its guidance raises ECEO accuracy from 94.31\% to 96.45\%. Thus, a branch with lower standalone accuracy can still improve its partner through subspace guidance and target scaling. Removing HGEE guidance lowers ECEO accuracy on all five datasets. Removing SS or CS also lowers accuracy on each dataset, supporting the use of both varied feature views and confidence-scaled supervision. Together with the further gains from BADF, these comparisons show benefits from cooperation during training and from using both branches at the decision stage. Removing PCP causes the largest ECEO decrease on four datasets; its role is examined below.

\begin{table*}[htbp]
\renewcommand{\arraystretch}{\arraystretchparam}
\centering
\caption{Testing accuracies (\%) of HiER-BLS and its ablated variants. HTI, SA, ASR, SS, and CS denote hierarchy type identification, subspace alignment, AdaBoost sample reweighting, subspace sampling, and confidence scaling. Removing HTI means randomly selecting one of the other two schedules.}
\label{tab:ablation}
\begin{tabular}{*{12}{c}  }
\toprule
\multirow{2}{*}{\textbf{Branch}} & \multirow{2}{*}{\diagbox{\textbf{Module}}{\textbf{Dataset}}} & \multicolumn{2}{c}{\textbf{MNIST}} & \multicolumn{2}{c}{\textbf{Fashion-MNIST}} & \multicolumn{2}{c}{\textbf{NORB}} & \multicolumn{2}{c}{\textbf{CIFAR10}} & \multicolumn{2}{c}{\textbf{CIFAR100}} \\
\cmidrule(lr){3-4} \cmidrule(lr){5-6} \cmidrule(lr){7-8} \cmidrule(lr){9-10} \cmidrule(lr){11-12}
& & Avg (\%) & Std (\%) & Avg (\%) & Std (\%) & Avg (\%) & Std (\%) & Avg (\%) & Std (\%) & Avg (\%) & Std (\%) \\
\midrule
\multirow{5}{*}{HGEE} & w/o HTI & 98.85 & 0.13 & 91.89 & 0.20 & 90.48 & 0.25 & 93.57 & 0.18 & 76.21 & 0.22 \\
& w/o FSR & 98.77 & 0.07 & 91.53 & \textbf{0.04} & 90.10 & 0.16 & 93.23 & 0.12 & 75.98 & 0.18 \\
& w/o SA & 98.84 & 0.11 & 92.02 & 0.17 & 90.59 & 0.22 & 93.62 & 0.19 & 76.52 & 0.24 \\
& w/o ASR & 98.87 & 0.04 & 92.00 & 0.15 & 90.72 & \textbf{0.11} & 93.66 & 0.09 & 76.64 & 0.17 \\
& Full & 98.91 & 0.05 & 92.22 & 0.09 & 91.89 & 0.15 & 93.78 & 0.13 & 76.93 & 0.14 \\
\midrule
\multirow{5}{*}{ECEO} & w/o HGEE & 98.99 & 0.06 & 92.67 & 0.15 & 94.31 & 0.35 & 95.07 & 0.10 & 78.10 & 0.17 \\
& w/o SS & 99.03 & 0.08 & 92.78 & 0.12 & 94.74 & 0.31 & 95.24 & 0.09 & 78.42 & 0.23 \\
& w/o CS & 99.08 & 0.06 & 93.06 & 0.07 & 96.05 & 0.29 & 95.45 & 0.21 & 78.74 & 0.14 \\
& w/o PCP & 99.02 & 0.11 & 92.59 & 0.16 & 94.01 & 0.40 & 94.90 & 0.13 & 77.91 & 0.19 \\
& Full & 99.12 & \textbf{0.02} & 93.19 & 0.08 & 96.45 & 0.37 & 95.63 & \textbf{0.07} & 78.96 & 0.16 \\
\midrule
HiER-BLS & Full & \textbf{99.33} & 0.04 & \textbf{93.72} & 0.05 & \textbf{96.63} & 0.29 & \textbf{95.97} & \textbf{0.07} & \textbf{79.44} & \textbf{0.13} \\
\bottomrule
\end{tabular}
\end{table*}

\subsubsection{Subspace alignment visualization}
Subspace alignment addresses how successive blocks remain related as their inputs shrink. It learns each new mapping by reconstructing the retained inputs from the preceding feature block. Fig.~\ref{fig:subspace-alignment} visualizes the resulting feature nodes on CIFAR100 with t-SNE~\cite{van2008visualizing}. Colors identify the full space and subspaces 1--4. Without alignment, the last two subspaces form groups apart from the earlier representations. With alignment, the groups lie adjacent in evolution order. This arrangement is consistent with the relationship between successive blocks encouraged by the reconstruction step. The accuracy comparisons provide complementary evidence: removing alignment lowers HGEE accuracy on every dataset, including decreases of 1.30 percentage points on NORB and 0.41 points on CIFAR100. Alongside the FSR and HTI results, this suggests that useful subspace evolution depends on both selecting later inputs and connecting their representations to earlier blocks.

\begin{figure*}[htbp]
    \centering
    \begin{subfigure}[b]{\halfnum}
        \centering
        \includegraphics[width=\linewidth]{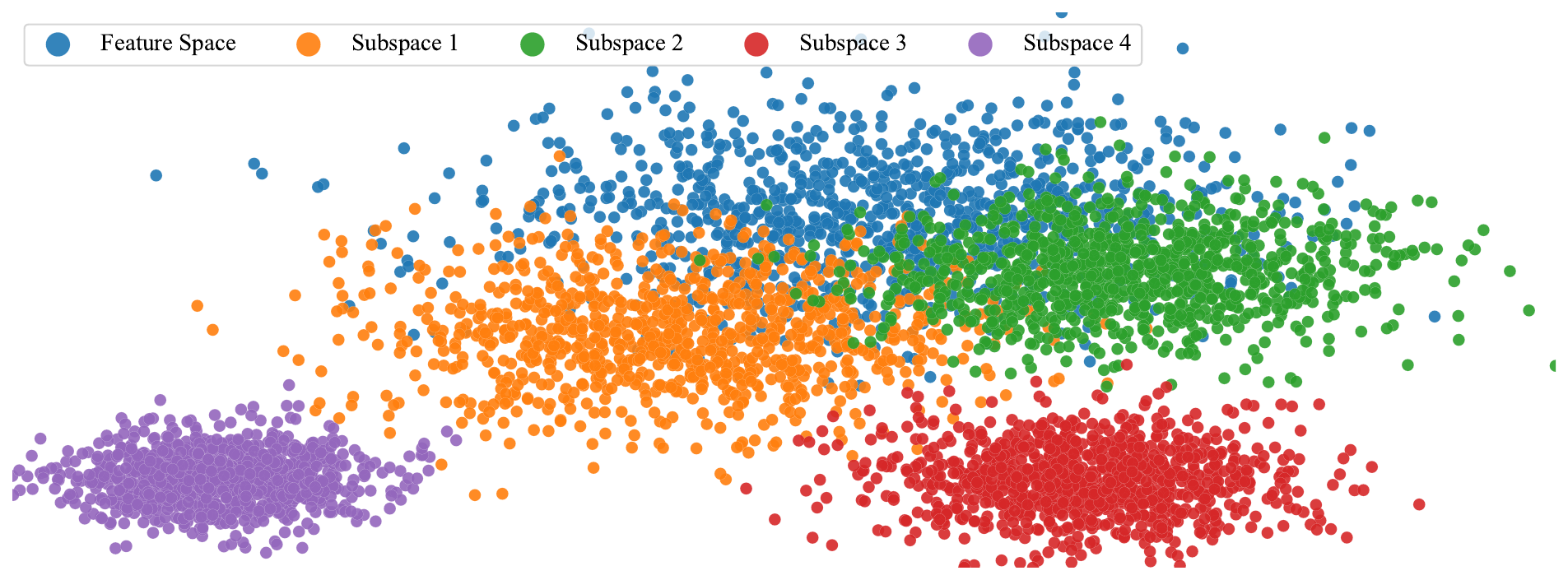}
        \caption{Visualization of feature nodes without subspace alignment.}
        \label{fig:wo-sa}
    \end{subfigure}
    \hfill
    \begin{subfigure}[b]{\halfnum}
        \centering
        \includegraphics[width=\linewidth]{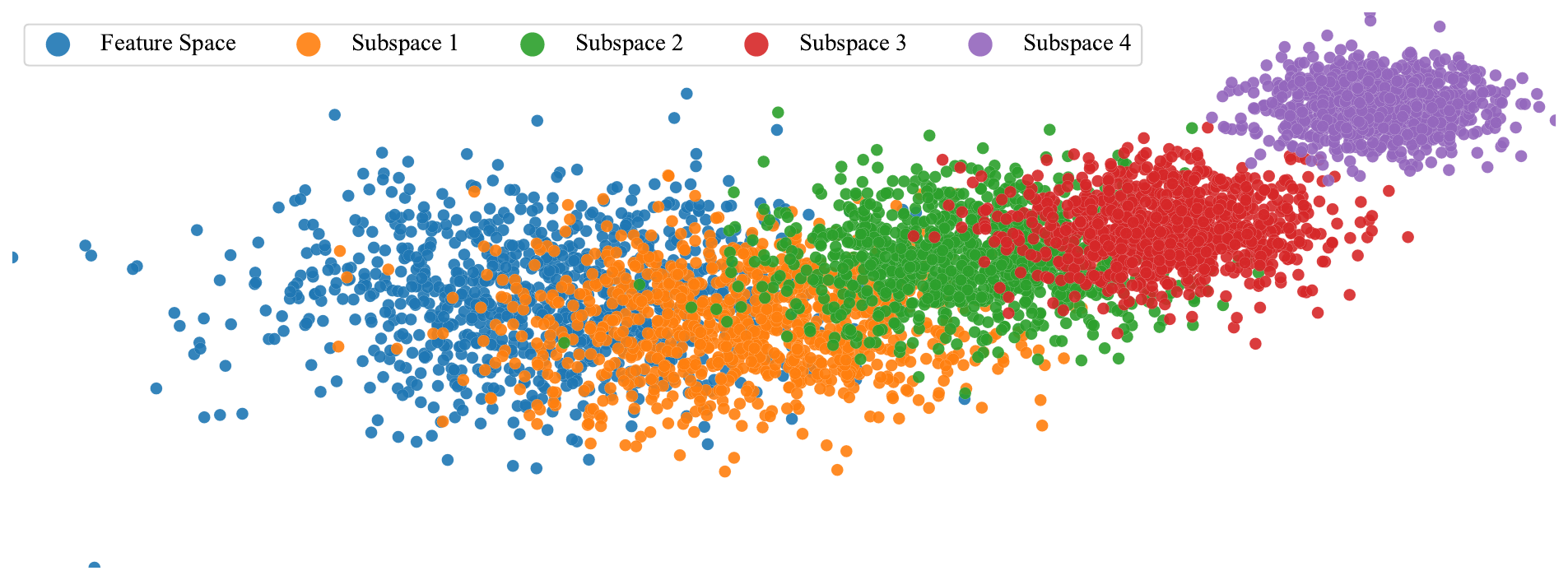}
        \caption{Visualization of feature nodes with subspace alignment.}
        \label{fig:w-sa}
    \end{subfigure}
    \caption{Two-dimensional t-SNE projections of HGEE features on CIFAR100 without and with subspace alignment. Colors identify the full feature space and subspaces 1--4.}
    \label{fig:subspace-alignment}
\end{figure*}

\subsubsection{Effect of Codebook Pruning}
\label{sec:target-space-optimization-via-pcp}
Each retained code column contributes to the decoded class scores. PCP evaluates these contributions through validation accuracy and removes columns whose deletion improves the joint prediction. In Fig.~\ref{fig:pcp-codebook-ensemble-effect}, the selected columns yield higher CIFAR10 accuracy at every ensemble stage. The advantage persists as accuracy approaches a plateau. Removing PCP lowers ECEO accuracy from 95.63\% to 94.90\% in Table~\ref{tab:ablation}.

LTSC is higher with PCP at every stage, and the gap widens as bases are added. Since LTSC averages true-class scores over samples and retained columns, these values reflect a larger mean contribution from the selected outputs. The concurrent accuracy gains show that the retained columns also work better together for classification. This agrees with Eq.~\eqref{eq:ensemble-class-scores}: each code column affects a class comparison through its learned output. The code-length and pruning studies thus identify two roles of redundancy. Additional coordinates can improve performance under weight errors, and selecting among learned coordinates can improve the ensemble's clean predictions. Effective use of ECOCs involves both providing redundancy and evaluating the contributions learned for it.

\begin{figure*}[htbp]
    \centering
    \includegraphics[width=\linewidth]{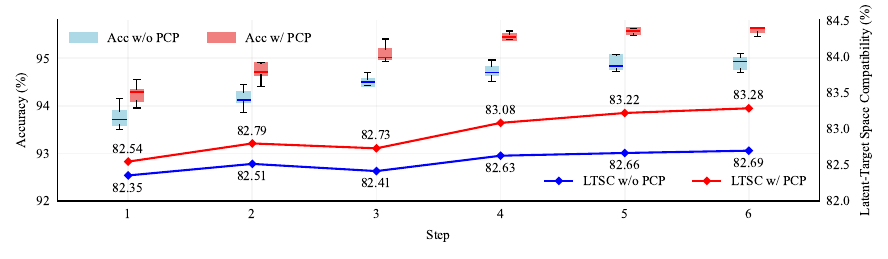}
    \caption{CIFAR10 accuracy (\%) and LTSC (\%) across ensemble stages, with and without PCP. LTSC is displayed as $100\times$ the value in Eq.~\eqref{eq:ltsc}.}
    \label{fig:pcp-codebook-ensemble-effect}
\end{figure*}

\section{Conclusion}
\label{sec:conclusion}
HiER-BLS shows the value of using incremental learning to guide both representation growth and subsequent encoded learning. The evolution trajectories and component comparisons support directing later blocks toward selected inputs while retaining the broader representations learned earlier. HGEE also improves ECEO despite having lower standalone accuracy. Its value therefore extends beyond its own predictions: the selected subspace size and sample confidence provide useful guidance to another learner. Combining the branch scores yields further gains.

The finite-readout analysis connects this learned representation to error-correcting decisions. Codeword correlations transform the fitted class scores, and prediction preservation depends on the actual output's distance to the nearest decoding boundary relative to perturbation sensitivity. This accounts for the target magnitudes used in confidence scaling. The experiments reveal two useful aspects of coding redundancy: longer codes retain value under weight errors after clean accuracy approaches a plateau, while pruning selects learned columns that improve the ensemble's predictions. Together, these findings support constructing useful representations and selecting their encoded outputs as complementary parts of broad learning. Future work will investigate budget-aware incremental computation and evaluate the framework on embedded hardware with device-specific weight errors.

\IEEEtriggercmd{\balance}
\IEEEtriggeratref{25}
\bibliographystyle{IEEEtran}
\bibliography{tcyb_reference_unique_simplified}

@article{breiman2001random,
  title={Random forests},
  author={Breiman, Leo},
  journal={Mach. Learn.},
  volume={45},
  number={1},
  pages={5--32},
  year={2001},
  publisher={Springer}
}

@misc{pen-based_recognition_of_handwritten_digits_81,
  author       = {Alpaydin, E. and Alimoglu, Fevzi.},
  title        = {{Pen-Based Recognition of Handwritten Digits}},
  year         = {1996},
  howpublished = {UCI Mach. Learn. Repository},
  note         = {{DOI}: https://doi.org/10.24432/C5MG6K}
}

@misc{waveform_database_generator,
  author       = {Breiman, L. and Stone, C.J.},
  title        = {{Waveform Database Generator (Version 1)}},
  year         = {1984},
  howpublished = {UCI Mach. Learn. Repository},
  note         = {{DOI}: https://doi.org/10.24432/C5CS3C}
}

@misc{letter_recognition_59,
  author       = {Slate, David},
  title        = {{Letter Recognition}},
  year         = {1991},
  howpublished = {UCI Mach. Learn. Repository},
  note         = {{DOI}: https://doi.org/10.24432/C5ZP40}
}

@misc{shuttle_148,
  title        = {{Statlog (Shuttle)}},
  howpublished = {UCI Mach. Learn. Repository},
  note         = {{DOI}: https://doi.org/10.24432/C5WS31}
}

@misc{led_display_domain_57,
  author       = {Breiman, L. and Friedman, J.H. and Olshen, R.A. and Stone, C.J.},
  title        = {{LED Display Domain}},
  year         = {1984},
  howpublished = {UCI Mach. Learn. Repository},
  note         = {{DOI}: https://doi.org/10.24432/C5FG61}
}

@article{yun2024adaboost,
  title={{AdaBoost}-stacking based on incremental broad learning system},
  author={Yun, Fan and Yu, Zhiwen and Yang, Kaixiang and Chen, C. L. Philip},
  journal={IEEE Trans. Knowl. Data Eng.},
  volume={36},
  number={12},
  pages={7585--7599},
  year={2024},
  publisher={IEEE}
}

@article{zhang2020analysis,
  title={Analysis and variants of broad learning system},
  author={Zhang, Liang and Li, Jiahao and Lu, Guoqing and Shen, Peiyi and Bennamoun, Mohammed and Shah, Syed Afaq Ali and Miao, Qiguang and Zhu, Guangming and Li, Ping and Lu, Xiaoyuan},
  journal={IEEE Trans. Syst., Man, Cybern., Syst.},
  volume={52},
  number={1},
  pages={334--344},
  year={2022},
  publisher={IEEE}
}

@article{pseudoinversebib, 
  title={A generalized inverse for matrices}, 
  volume={51}, 
  DOI={10.1017/S0305004100030401}, 
  number={3}, 
  journal={Math. Proc. Cambridge Philos. Soc.}, 
  author={Penrose, R.}, 
  year={1955}, 
  pages={406--413}
}

@article{chen2018universal,
  title={Universal approximation capability of broad learning system and its structural variations},
  author={Chen, C. L. Philip and Liu, Zhulin and Feng, Shuang},
  journal={IEEE Trans. Neural Netw. Learn. Syst.},
  volume={30},
  number={4},
  pages={1191--1204},
  year={2019},
  publisher={IEEE}
}

@article{liu2020stacked,
  title={Stacked broad learning system: From incremental flatted structure to deep model},
  author={Liu, Zhulin and Chen, C. L. Philip and Feng, Shuang and Feng, Qiying and Zhang, Tong},
  journal={IEEE Trans. Syst., Man, Cybern., Syst.},
  volume={51},
  number={1},
  pages={209--222},
  year={2021},
  publisher={IEEE}
}

@article{freund1997decision,
  title={A decision-theoretic generalization of on-line learning and an application to boosting},
  author={Freund, Yoav and Schapire, Robert E},
  journal={J. Comput. Syst. Sci.},
  volume={55},
  number={1},
  pages={119--139},
  year={1997},
  publisher={Elsevier}
}

@article{dietterich1994solving,
  title={Solving multiclass learning problems via error-correcting output codes},
  author={Dietterich, Thomas G and Bakiri, Ghulum},
  journal={J. Artif. Intell. Res.},
  volume={2},
  pages={263--286},
  year={1995}
}

@article{allwein2000reducing,
  title={Reducing multiclass to binary: A unifying approach for margin classifiers},
  author={Allwein, Erin L and Schapire, Robert E and Singer, Yoram},
  journal={J. Mach. Learn. Res.},
  volume={1},
  number={Dec},
  pages={113--141},
  year={2000}
}

@article{yu2024error,
  title={Error correction output codes for robust neural networks against weight-errors: A neural tangent kernel point of view},
  author={Yu, Anlan and Jing, Shusen and Lyu, Ning and Wen, Wujie and Yan, Zhiyuan},
  journal={Adv. Neural Inf. Process. Syst.},
  volume={37},
  pages={82493--82513},
  year={2024}
}

@incollection{lancaster2005chi,
  title={Chi-Square Distribution},
  author={Lancaster, Henry Oliver and Seneta, Eugene},
  booktitle={Encyclopedia of Biostatistics},
  editor={Armitage, Peter and Colton, Theodore},
  year={2005},
  publisher={Wiley},
  doi={10.1002/0470011815.b2a15018}
}

@ARTICLE{zhang2015heuristic,
  author={Zhang, Xiao-Lei},
  journal={IEEE Trans. Cybern.}, 
  title={Heuristic Ternary Error-Correcting Output Codes Via Weight Optimization and Layered Clustering-Based Approach}, 
  year={2015},
  volume={45},
  number={2},
  pages={289-301},
  doi={10.1109/TCYB.2014.2325603}}

@article{gupta2022scalable,
  title={Scalable design of error-correcting output codes using discrete optimization with graph coloring},
  author={Gupta, Samarth and Amin, Saurabh},
  journal={Adv. Neural Inf. Process. Syst.},
  volume={35},
  pages={10094--10107},
  year={2022}
}

@article{spearman1961proof,
  title={The proof and measurement of association between two things},
  author={Spearman, Charles},
  journal={Am. J. Psychol.},
  volume={15},
  number={1},
  pages={72--101},
  year={1904},
  doi={10.2307/1412159}
}

@inproceedings{dietterich2018error,
  title={Error-correcting output codes: A general method for improving multiclass inductive learning programs},
  author={Dietterich, Thomas G and Bakiri, Ghulum},
  booktitle={Proc. Natl. Conf. Artif. Intell. (AAAI)},
  pages={572--577},
  year={1991},
  publisher={AAAI Press},
  url={https://cdn.aaai.org/AAAI/1991/AAAI91-089.pdf}
}

@ARTICLE{2020FuzzyBLS,
  author={Feng, Shuang and Chen, C. L. Philip},
  journal={IEEE Trans. Cybern.}, 
  title={Fuzzy Broad Learning System: A Novel Neuro-Fuzzy Model for Regression and Classification}, 
  year={2020},
  volume={50},
  number={2},
  pages={414-424},
  doi={10.1109/TCYB.2018.2857815}}

@article{xiao2017fashion,
  title={{Fashion-MNIST}: a novel image dataset for benchmarking machine learning algorithms},
  author={Xiao, Han and Rasul, Kashif and Vollgraf, Roland},
  journal={arXiv:1708.07747},
  year={2017}
}

@techreport{krizhevsky2009learning,
  title={Learning multiple layers of features from tiny images},
  author={Krizhevsky, Alex},
  institution={University of Toronto},
  address={Toronto, ON, Canada},
  year={2009},
  url={https://www.cs.toronto.edu/~kriz/learning-features-2009-TR.pdf}
}

@inproceedings{lecun2004learning,
  title={Learning methods for generic object recognition with invariance to pose and lighting},
  author={LeCun, Yann and Huang, Fu Jie and Bottou, Leon},
  booktitle={Proc. IEEE Comput. Soc. Conf. Comput. Vis. Pattern Recognit. (CVPR 2004)},
  volume={2},
  pages={97--104},
  year={2004},
  organization={IEEE}
}

@article{gong2021research,
  title={Research review for broad learning system: Algorithms, theory, and applications},
  author={Gong, Xinrong and Zhang, Tong and Chen, C. L. Philip and Liu, Zhulin},
  journal={IEEE Trans. Cybern.},
  volume={52},
  number={9},
  pages={8922--8950},
  year={2022},
  publisher={IEEE}
}

@inproceedings{yicheng2021broad,
  title={Broad learning system based on ensemble learning},
  author={Yan, Yicheng and Guo, Wei and Wang, Liwen},
  booktitle={Proc. Int. Conf. Artif. Intell. Big Data Algorithms (CAIBDA)},
  pages={62--67},
  year={2021},
  doi={10.1109/CAIBDA53561.2021.00021},
  organization={IEEE}
}

@article{lecun1998gradient,
  title={Gradient-based learning applied to document recognition},
  author={LeCun, Yann and Bottou, L{\'e}on and Bengio, Yoshua and Haffner, Patrick},
  journal={Proc. IEEE},
  volume={86},
  number={11},
  pages={2278--2324},
  year={1998},
  publisher={Ieee}
}

@inproceedings{he2016deep,
  title={Deep residual learning for image recognition},
  author={He, Kaiming and Zhang, Xiangyu and Ren, Shaoqing and Sun, Jian},
  booktitle={Proc. IEEE Conf. Comput. Vis. Pattern Recognit.},
  pages={770--778},
  year={2016}
}

@inproceedings{yang2018cnn,
  title={A {CNN}-based broad learning system},
  author={Yang, Fan},
  booktitle={Proc. IEEE 4th Int. Conf. Comput. Commun. (ICCC)},
  pages={2105--2109},
  year={2018},
  organization={IEEE}
}

@article{lei2024convbls,
  title={{ConvBLS}: An effective and efficient incremental convolutional broad learning system combining deep and broad representations},
  author={Lei, Chunyu and Guo, Jifeng and Chen, C. L. Philip},
  journal={IEEE Trans. Artif. Intell.},
  volume={5},
  number={10},
  pages={5075--5089},
  year={2024},
  publisher={IEEE}
}

@article{chen2017broad,
  title={Broad learning system: An effective and efficient incremental learning system without the need for deep architecture},
  author={Chen, C. L. Philip and Liu, Zhulin},
  journal={IEEE Trans. Neural Netw. Learn. Syst.},
  volume={29},
  number={1},
  pages={10--24},
  year={2018},
  publisher={IEEE}
}

@article{zhong2024robust,
  title={Robust incremental broad learning system for data streams of uncertain scale},
  author={Zhong, Linjun and Chen, C. L. Philip and Guo, Jifeng and Zhang, Tong},
  journal={IEEE Trans. Neural Netw. Learn. Syst.},
  volume={36},
  number={4},
  pages={7580--7593},
  year={2025},
  publisher={IEEE}
}

@article{chen2025efficient,
  title={Efficient incremental learning for Inverse Matrix-Free broad learning system},
  author={Chen, Guang-Ze and Lei, Chunyu and Liu, Zhulin and Chen, C. L. Philip and Sun, Hai-Wei},
  journal={Inf. Fusion},
  volume={127},
  pages={103842},
  year={2026},
  publisher={Elsevier}
}

@article{du2023class,
  title={Class-incremental learning method with fast update and high retainability based on broad learning system},
  author={Du, Jie and Liu, Peng and Vong, Chi-Man and Chen, Chuangquan and Wang, Tianfu and Chen, C. L. Philip},
  journal={IEEE Trans. Neural Netw. Learn. Syst.},
  volume={35},
  number={8},
  pages={11332--11345},
  year={2024},
  publisher={IEEE}
}

@article{fu2022task,
  title={Task-incremental broad learning system for multi-component intelligent fault diagnosis of machinery},
  author={Fu, Yang and Cao, Hongrui and Chen, Xuefeng and Ding, Jianming},
  journal={Knowl.-Based Syst.},
  volume={246},
  pages={108730},
  year={2022},
  publisher={Elsevier}
}

@article{liu2024dynamic,
  title={Dynamic graph regularized broad learning with marginal fisher representation for noisy data classification},
  author={Liu, Licheng and Chen, Junhao and Liu, Tingyun and Chen, C. L. Philip and Yang, Bin},
  journal={IEEE Trans. Cybern.},
  volume={55},
  number={1},
  pages={50--63},
  year={2025},
  doi={10.1109/TCYB.2024.3471919},
  publisher={IEEE}
}

@article{deng2025robust,
  title={Robust dual-model collaborative broad learning system for classification under label noise environments},
  author={Deng, Wu and Shen, Jiuru and Ding, Jianming and Zhao, Huimin},
  journal={IEEE Internet Things J.},
  volume={12},
  number={12},
  pages={21055--21067},
  year={2025},
  publisher={IEEE}
}

@article{qian2024partial,
  title={Partial multilabel learning using noise-tolerant broad learning system with label enhancement and dimensionality reduction},
  author={Qian, Wenbin and Tu, Yanqiang and Huang, Jintao and Shu, Wenhao and Cheung, Yiu-Ming},
  journal={IEEE Trans. Neural Netw. Learn. Syst.},
  volume={36},
  number={2},
  pages={3758--3772},
  year={2025},
  publisher={IEEE}
}

@article{liu2023broad,
  title={When broad learning system meets label noise learning: A reweighting learning framework},
  author={Liu, Licheng and Chen, Junhao and Yang, Bin and Feng, Qiying and Chen, C. L. Philip},
  journal={IEEE Trans. Neural Netw. Learn. Syst.},
  volume={35},
  number={12},
  pages={18512--18524},
  year={2024},
  publisher={IEEE}
}

@article{yang2021extracting,
  title={Extracting and composing robust features with broad learning system},
  author={Yang, Kaixiang and Liu, Yuchen and Yu, Zhiwen and Chen, C. L. Philip},
  journal={IEEE Trans. Knowl. Data Eng.},
  volume={35},
  number={4},
  pages={3885--3896},
  year={2023},
  publisher={IEEE}
}

@article{yu2021progressive,
  title={Progressive ensemble kernel-based broad learning system for noisy data classification},
  author={Yu, Zhiwen and Lan, Kankan and Liu, Zhulin and Han, Guoqiang},
  journal={IEEE Trans. Cybern.},
  volume={52},
  number={9},
  pages={9656--9669},
  year={2022},
  publisher={IEEE}
}

@article{van2008visualizing,
  title={Visualizing data using {t-SNE}},
  author={van der Maaten, Laurens and Hinton, Geoffrey},
  journal={J. Mach. Learn. Res.},
  volume={9},
  number={86},
  pages={2579--2605},
  year={2008}
}

@inproceedings{krishnamoorthy2013matrix,
  title={Matrix inversion using Cholesky decomposition},
  author={Krishnamoorthy, Aravindh and Menon, Deepak},
  booktitle={Proc. Signal Process.: Algorithms Archit. Arrangements Appl. (SPA)},
  pages={70--72},
  year={2013},
  organization={IEEE}
}

@article{greville1960some,
  title={Some applications of the pseudoinverse of a matrix},
  author={Greville, Thomas NE},
  journal={SIAM Rev.},
  volume={2},
  number={1},
  pages={15--22},
  year={1960},
  publisher={SIAM}
}

@article{zhu2019ridge,
  author = {Zhu, Hufei},
  title = {Two Ridge Solutions for the Incremental Broad Learning System on Added Nodes},
  journal = {arXiv preprint arXiv:1911.04872},
  year = {2019},
  note = {Version 5, revised June 2024},
  doi = {10.48550/arXiv.1911.04872},
  url = {https://arxiv.org/abs/1911.04872v5}
}

\end{document}